\definecolor{pantone200}{RGB}{183,18,52} 
\renewrobustcmd{\boldmath}{}
\newrobustcmd{\B}{\fontseries{b}\selectfont}
\newtheorem{theorem}{Theorem}
\newtheorem{lemma}[theorem]{Lemma}
\newtheorem{corollary}[theorem]{Corollary}
\def\Ea{$E_{G^+}$}
\def\Eb{$E_{\mathbb{G}}$}
\def\Ec{$E_{\mathbb{G}}^*$}
\DeclareMathOperator*{\argmax}{arg\,max}
\title{Bayesian Network Structural Consensus via Greedy Min-Cut Analysis}
\author[1,2]{Pablo Torrijos \orcidlink{0000-0002-8395-3848}\thanks{Corresponding Author. Email: Pablo.Torrijos@uclm.es.}}
\author[1,2]{José M. Puerta \orcidlink{0000-0002-9164-5191}}
\author[1,3]{Juan A. Aledo \orcidlink{0000-0003-1786-8087}}
\author[1,2]{José A. Gámez \orcidlink{0000-0003-1188-1117}}
\affil[1]{Instituto de Investigación en Informática de Albacete (I3A), Universidad de Castilla-La Mancha, Albacete, Spain}
\affil[2]{Departamento de Sistemas Informáticos, Universidad de Castilla-La Mancha, Albacete, Spain}
\affil[3]{Departamento de Matemáticas, Universidad de Castilla-La Mancha, Albacete, Spain}
\date{}
\begin{document}
\maketitle

\begin{abstract}
    This paper presents the Min-Cut Bayesian Network Consensus (MCBNC) algorithm, a greedy method for structural consensus of Bayesian Networks (BNs), with applications in federated learning and model aggregation. MCBNC prunes weak edges from an initial unrestricted fusion using a structural score based on min-cut analysis, integrated into a modified Backward Equivalence Search (BES) phase of the Greedy Equivalence Search (GES) algorithm. The score quantifies edge support across input networks and is computed using max-flow. Unlike methods with fixed treewidth bounds, MCBNC introduces a pruning threshold~$\theta$ that can be selected post hoc using only structural information. Experiments on real-world BNs show that MCBNC yields sparser, more accurate consensus structures than both canonical fusion and the input networks. The method is scalable, data-agnostic, and well-suited for distributed or federated scenarios.
\end{abstract}

\begin{center}
    \begin{minipage}{0.9\textwidth}
        \small
        \centering
        \textbf{Links}
        \flushleft
        \url{https://github.com/ptorrijos99/BayesFL} (code), \\
        \url{https://doi.org/10.5281/zenodo.14917796} (datasets), \\
        \url{https://doi.org/---/---} (official proceedings version accepted in AAAI-26, without appendix), \\ 
        \url{https://arxiv.org/abs/2504.00467} (this version, including appendix). \\
    \end{minipage}
\end{center}
\vspace{-0.5\baselineskip}

%
%
\section*{Introduction}\label{sec:introduction}

\frenchspacing
Bayesian Networks (BNs)~\cite{Jensen_Nielsen,Koller_Friedman} are a formalism for modeling uncertainty probabilistically, with widespread applications in domains such as medical diagnosis~\cite{McLachlan2020}, bioinformatics~\cite{Angelopoulos_2022,Bernaola2023}, and environmental risk assessment~\cite{Dai2024}. Their semantic clarity, stemming from the encoding of conditional independencies via Directed Acyclic Graphs (DAGs), makes them particularly attractive for interpretable decision-making~\cite{MeekesRG15}. 
In many scenarios, it is necessary to aggregate multiple BNs, whether elicited from different experts or learned from disjoint datasets, into a single consensus structure. This task, known as \emph{structural fusion}~\cite{pena_finding_2011}, aims to consolidate shared independencies while minimizing model redundancy. Both BN learning and fusion are NP-hard~\cite{Jensen_Nielsen}, and naïve aggregation strategies often lead to complex models with poor inference performance.

A common approach is to compute the union of the input DAGs under a fixed node ordering~\cite{Puerta2021Fusion}, producing a dense structure that contains all independences supported by at least one input BN. Although this guarantees the definition of structural fusion, it tends to inflate the treewidth ($tw$) of the resulting network, severely limiting its practical use. The time complexity of exact inference in a BN is exponential in this $tw$, specifically $O(n \cdot k^{tw+1})$~\cite{Chandrasekaran2008}, where $n$ is the number of variables and $k$ the number of states per variable.

To address this, pruning-based methods have been studied. Genetic algorithms have been used to enforce treewidth constraints via edge deletion~\cite{Torrijos2024_CEC}, and more recently, to directly optimize consensus structure quality under user-defined objectives~\cite{Torrijos2025_GECCO}. However, these methods remain computationally expensive and require setting parameters such as target treewidth or stopping criteria, which are difficult to determine without access to data, limiting their application to scenarios such as federated learning \cite{mcmahan17aFL}.

Greedy algorithms offer a scalable alternative, but with clear limitations. \cite{Torrijos2024_CEC} also proposed a greedy pruning that approximates the unrestricted fusion; \cite{Torrijos2025_GECCO} used a similar heuristic to mimic the input graphs. Both rely on edge frequency and ignore other structural properties~\cite{Koller_Friedman}, so they serve only as initializers for genetic algorithms and fail to operate standalone. They also require a fixed treewidth bound: a value set too low removes essential edges, while a high value leaves inference intractable.

We propose a scalable, parameter-light strategy for recovering a consensus structure from input graphs without access to data. Our method begins from the unrestricted fusion obtained using the heuristic node ordering of~\cite{Puerta2021Fusion} and iteratively prunes edges based on a flow-based structural score that captures edge support across the input networks. This process prunes spurious dependencies without constraining treewidth. The only free parameter is a pruning threshold $\theta$, which can be near-optimally selected a posteriori using only the input graph structures. 

Federated learning~\cite{mcmahan17aFL} lets clients train models collaboratively without sharing private data. In the context of BNs, one natural approach~\cite{Torrijos2024_DS} is for each client to learn a local structure from its own dataset, which is then aggregated into a global consensus BN. Structural fusion is therefore the critical step, performed without data or a gold standard. Experiments in this setting confirm that our method, \emph{Min-Cut Bayesian Network Consensus} (MCBNC), consistently produces consensus structures that are not only sparser and more interpretable than those from canonical fusion but also more faithful to the underlying dependency structure than the input networks themselves on average.

\paragraph{Contributions.} The principal contributions are:
\begin{itemize}
    \item A max-flow–based score to quantify edge support.

    \item Integration of this score into the Backward Equivalence Search (BES) phase of the Greedy Equivalence Search (GES) algorithm to prune edges within the Markov equivalence class of the fused network.

    \item An adaptive pruning rule with a single threshold~$\theta$, \mbox{selected} post hoc using only input graphs.
\end{itemize}

\paragraph{Paper organization.} The paper proceeds as follows. \textbf{Background} reviews key concepts in BN fusion and flow-based analysis. \textbf{Proposal} introduces the MCBNC algorithm and its theoretical foundations. \textbf{Experimental \mbox{Methodology}} details the evaluation setup. \textbf{\mbox{Experimental} Results} report results on real and synthetic networks. \textbf{\mbox{Conclusions}} summarize findings and future directions.

%
%
\section*{Preliminaries}\label{sec:preliminaries}

\subsection*{Bayesian Networks.}
A Bayesian Network (BN) is a pair $B\!=\!(G,P)$, where $G\!=\!(V,E)$ is a directed acyclic graph (DAG) representing conditional (in)dependences over variables $V\!=\!\{v_1,\dots,v_n\}$, and $P$ is a set of probability distributions that factorizes as
\begin{equation}
    \mathbb{P}(V)=\prod_{i=1}^{n}\mathbb{P}\bigl(v_i \mid \mathbf{Pa}_G(v_i)\bigr),
\end{equation}
where $\mathbf{Pa}_G(v_i)$ denotes the parent set of $v_i$ in $G$. 
The graph $G$ encodes conditional independencies $I(G)$ via \mbox{\emph{d-separation}}~\cite{Koller_Friedman}. A DAG $G$ is an \mbox{$I$-map} of $G'$ when $I(G)\subseteq I(G')$ and is \emph{minimal} if removing any arc destroys this property. DAGs that encode the same $I(G)$ form a Markov equivalence class, representable by a Completed Partially Directed Acyclic Graph (CPDAG) $\mathcal{G}$~\cite{chickering_optimal_2002}. In $\mathcal{G}$, directed edges appear when their orientation is invariant across all equivalent DAGs; undirected edges denote ambiguity.

\paragraph{Treewidth.}
Let $\widetilde G$ be the moral graph of $G$ (all parents of each node joined and edges made undirected). The treewidth $\operatorname{tw}(G)$ is the size of the largest clique\footnote{A clique is a fully connected node subset.} in an optimal triangulation of $\widetilde G$ minus one. Exact inference is $O\bigl(n\,k^{\operatorname{tw}(G)+1}\bigr)$, where $k$ is the maximum state count per variable~\cite{Chandrasekaran2008}; low treewidth is therefore essential for BN tractability and usability.

\subsection*{Structural Fusion of Bayesian Networks} \label{subsec:fusion_bns}
Let $\{G_i = (V, E_i)\}_{i=1}^r$ be DAGs over a shared variable set $V$. A common structural fusion strategy~\cite{pena_finding_2011,Puerta2021Fusion} applies a total node ordering $\sigma$ to each $G_i$, producing acyclic DAGs $\{G_i^\sigma\}_{i=1}^r$ where all parents of a node precede it. The fused DAG is then
\begin{equation}  
    G^+ = (V, E^+), \qquad E^+ = \bigcup_{i=1}^r E_i^\sigma.
\end{equation}  
This union is guaranteed to be acyclic and is a minimal \mbox{$I$-map} of the intersection $\bigcap_i I(G_i^\sigma)$. The final density of $G^+$ depends strongly on the ordering $\sigma$, since some orderings induce fewer edges when reorienting the $G_i$. Finding the optimal $\sigma$ is NP-hard, so we adopt the heuristic from~\cite{Puerta2021Fusion}, which gives near-optimal orderings in practice.

\paragraph{From fusion to consensus.} 
Strict fusion retains all dependencies present in any input, often producing dense graphs with high treewidth, especially when the $G_i$ are heterogeneous. To address this, we define a \emph{consensus} DAG $G^* = (V, E^*)$ that maximizes a structural score:
\begin{equation}
    E^* = \argmax_{E' \in \mathcal{E}} \sum_{e \in E'} \psi(e),
\end{equation}
where $\mathcal{E}$ is a search space (e.g., subsets of $E^+$ or possible edges on $V$), and $\psi(e)$ quantifies how strongly edge $e$ is supported across the input networks. This idea was formalized in~\cite{Torrijos2025_GECCO} as an alternative to canonical fusion, enabling more interpretable and tractable structures.

\subsection*{Backward Equivalence Search (BES)} \label{subsec:bes}  
Greedy Equivalence Search (GES) is a two-phase algorithm for BN structure learning~\cite{chickering_optimal_2002}. It first adds edges in a forward phase and then removes them in a backward phase, Backward Equivalence Search (BES). Both phases operate over Markov-equivalent classes and use a decomposable score, such as Bayesian Dirichlet equivalent uniform (BDeu), to guide edge modifications. BES iteratively deletes the edge that gives the most significant score improvement. 
Formally, given a DAG $G=(V,E)$, data $D$ and the score $f(G : D)$, BES replaces $G$ by
\begin{equation}  
    G'=\argmax_{e\in E} f\bigl(G\setminus\{e\}\,:D\bigr),
\end{equation}  
and stops when no deletion increases the score. Its \textsc{Delete} operator \cite{chickering_optimal_2002} will be reused by our method.

\paragraph{Min-cut and max-flow.}
Let $D=(V,E)$ be a directed graph with non-negative capacities $c:E\to\mathbb{R}^+$. For a source $s$ and sink $t$, a cut $(S,T)$ satisfies $s\in S$, $t\in T$, $S\cup T=V$, $S\cap T=\emptyset$, and has capacity
\begin{equation}  
    \mathrm{cap}(S,T)=\sum_{u\in S,\,v\in T} c(u\to v).
\end{equation}  
The \emph{min-cut} problem seeks the cut of minimum capacity. The \emph{max-flow} problem finds a flow $f:E\to\mathbb{R}^+$ that respects capacities and flow conservation and maximises
\begin{equation}  
    \mathrm{val}(f)=\sum_{e\in\delta^+(s)} f(e).
\end{equation}  
The Max-Flow Min-Cut Theorem~\cite{Ahuja1993} states
\begin{equation}
    \max_f \mathrm{val}(f)=\min_{(S,T)}\mathrm{cap}(S,T).
\end{equation}  

\paragraph{Ford-Fulkerson algorithm.}
Any polynomial-time max-flow routine can be used. We employ the classical Ford-Fulkerson augmenting-path algorithm~\cite{Ford1956} for its simplicity. Implementation details are standard; refer to the Technical Appendix (Sec. \ref{sec:ford-fulkerson-appx}) for details.

%
%
\section*{Method: Min-Cut Bayesian Network Consensus (MCBNC)} \label{sec:proposal}
Structural fusion methods (e.g., \cite{Puerta2021Fusion}) compute a fused DAG $G^+$ that retains all (in)dependencies in the input BNs $\{B_i\}_{i=1}^r$ with structures $\{G_i\!=\!(V, E_i)\}_{i=1}^r$. While correct by construction, $G^+$ is often dense and yields high treewidth, which limits its usability. 
Our method, \emph{\mbox{Min-Cut} Bayesian Network Consensus} (MCBNC), addresses this by iteratively pruning weakly supported edges from $G^+$. The approach builds on the Backward Equivalence Search (BES) phase of Greedy Equivalence Search (GES) \cite{chickering_optimal_2002}, replacing its likelihood-based scoring with a structural score based on the max-flow min-cut algorithm. This score quantifies the support of each edge across the input graphs and enables parameterized pruning using a threshold $\theta$.
The intuition is that an edge $u\rightarrow v$ is critical only if its removal would disconnect $u$ and $v$ in the moralized ancestral subgraphs of many input DAGs. If many alternative paths exist, the min-cut is large, indicating the edge is redundant. Pruning such weakly supported edges simplifies fusion while preserving consensus dependencies.

Before pruning, $G^+$ is converted to its CPDAG $\mathcal{G}^+$ to ensure compatibility with BES operators such as \textsc{Delete} \cite{chickering_optimal_2002}.
The complete procedure is summarized in Alg.~\ref{alg:MCBNC}, with each component detailed in the subsections below. A simple example of the algorithm's execution is provided in the Technical Appendix (Sec.~\ref{sec:example}).

\begin{algorithm}[htb]
\caption{Min-Cut Bayesian Network Consensus} \label{alg:MCBNC}
\begin{algorithmic}[1]
    \Require Input DAGs $\{G_i\!=\!(V,E_i)\}_{i=1}^r$, threshold $\theta$, maximum subset size $k_{\max}$
    \Ensure  Consensus DAG $G^*$

    \State $\sigma \gets \textsc{Ordering}(\{G_i\})$ \Comment{\cite{Puerta2021Fusion}}
    \For{$i=1$ to $r$}      
        \State $G_i^\sigma \gets \textsc{MinimalIMap}(G_i,\sigma)$ \Comment{\cite{pena_finding_2011}}
    \EndFor
    \State $G^+ \gets (V,\bigcup_i E_i^\sigma)$ \Comment{Unrestricted fusion}
    \State $\mathcal{G} \gets \textsc{DAGtoCPDAG}(G^+)$ \Comment{\cite{chickering_optimal_2002}}
    
    \While{\textbf{true}}
        \State $(e^*,H^*,\Psi^*, \mathcal{C^*}) \!\gets\! \textsc{BestEdge}(\mathcal{G},\{G_i\}_{i=1}^r,k_{\max})$
        \If{$\Psi^* > \theta$} \textbf{break} \EndIf
        \State $\mathcal{G} \gets \textsc{Delete}(\mathcal{G},e^*,H^*)$ \Comment{\cite{chickering_optimal_2002}}
        \State $\{G_i \leftarrow G_i \setminus \mathcal{C}_i\}_{i=1}^r$  \Comment{Remove cut edges}
    \EndWhile
    \State $G^* \gets \textsc{PDAGtoDAG}(\mathcal{G})$ \Comment{A DAG consistent with $\mathcal{G}$}
    \State \Return $G^*$
\end{algorithmic}
\end{algorithm}

\subsection*{Edge Criticality via Min-Cut}
MCBNC prioritizes edge removals that preserve key dependencies while reducing graph complexity. To guide this, a \emph{criticality score} $\Psi^H_{(u \to v)}$ is computed from flow separation in the moralized input DAGs. The score quantifies the structural relevance of each edge $e=(u\to v)$ in the fused CPDAG $\mathcal{G}^+$. Following~\cite{chickering_optimal_2002}, deletions must preserve the Markov equivalence class. For each edge $e\!=\!(u\!\to\!v)$ in $\mathcal{G}^+$, the set of valid conditioning nodes is:
\begin{equation}  
    \mathcal{N}_{uv} = \{w \mid w\!\to\!v \text{ in } \mathcal{G}^+ \text{ and } w{-}u \text{ is undirected in } \mathcal{G}^+\,\}.
\end{equation}  

Given a candidate subset $H \subseteq \mathcal{N}_{uv}$, the criticality score $\Psi^H_{(u \to v)}$ is computed as follows (Alg.~\ref{alg:criticality}):
\begin{enumerate}
    \item For each input DAG $\{G_i\}_{i=1}^r$, extract the ancestral subgraph\footnote{The ancestral subgraph of a set $S$ in a DAG $G$ is the subgraph induced by all nodes from which there exists a directed path to some node in $S$, including the nodes in $S$ themselves.} of $\{u, v\} \cup H$, moralize it, and remove all nodes in $H$, yielding the conditioned graphs $\{\widetilde{G}_i^{H}\}_{i=1}^r$.
    
    \item On each conditioned graph $\{\widetilde{G}_i^{H}\}_{i=1}^r$, compute the size of the minimum cut separating $u$ and $v$ using the Ford-Fulkerson algorithm~\cite{Ford1956}.
    
    \item Return the average cut size across all graphs, which defines the criticality score $\Psi^H_{(u \to v)}$.
\end{enumerate}

\begin{algorithm}[htb]
\caption{\textsc{Criticality}}\label{alg:criticality}
\begin{algorithmic}[1]
    \Function{Criticality}{$(u \!\to\! v),\,\{G_i\}_{i=1}^r,\,H$}
        \For{$i = 1$ \textbf{to} $r$}
            \State $A_i \!\gets\! \textsc{AncestralSubgraph}\bigl(G_i,\{u,v\}\cup H\bigr)$
            \State $\widetilde G_i^H \gets \textsc{Moralize}(A_i)\setminus H$
            \State $S_i^H \gets \textsc{MinCut}\bigl(\widetilde G_i^H,u,v\bigr)$
        \EndFor
        \State $\Psi_{(u\to v)}^{H} \gets \frac{1}{r}\sum_{i=1}^{r}\lvert S_i^H\rvert$
        \State $\mathcal{C}_{(u\to v)}^{H} \gets \bigcup_{i=1}^{r} S_i^H$
        \State \Return $\bigl(\Psi_{(u\to v)}^{H},\,\mathcal{C}_{(u\to v)}^{H}\bigr)$
    \EndFunction
\end{algorithmic}
\end{algorithm}

Edges with lower $\Psi^{H}_{(u \to v)}$ contribute less to the structural integrity of the fused network and are prioritized for removal. This score-based strategy replaces likelihood-based criteria and avoids fixed structural constraints.

\subsection*{Greedy Edge Search} \label{subsec:greedypruning}
MCBNC performs edge deletion through a greedy search over the space of Markov equivalence classes, following the Backward Equivalence Search (BES) strategy from GES~\cite{chickering_optimal_2002}. Pruning operates on the CPDAG $\mathcal{G}^+$, where edges can be directed or undirected. Undirected edges are evaluated in both orientations $(u \to v)$ and $(v \to u)$, ensuring that all valid deletion candidates are considered.

The function \textsc{BestEdge} (Alg.~\ref{alg:bestedge}) selects, at each iteration, the least critical edge based on its structural support. For each arc $(u \to v)$, the procedure is as follows:

\begin{enumerate}
    \item Identify the valid conditioning set $\mathcal{N}_{uv}$ of nodes that are parents of $v$ and share an undirected edge with $u$ in $\mathcal{G}^+$.
    \item Generate all subsets $H \subseteq \mathcal{N}_{uv}$ of size at most $k_{\max}$, where $k_{\max}$ is a user-defined pruning budget.
    \item For each $H$, compute the criticality score $\Psi^H_{(u \to v)}$ using the method on Alg. \ref{alg:criticality}.
    \item Select the pair $(e^*, H^*)$ minimizing the score and return the edge $e^* = (u \to v)$, its score $\Psi^{H^*}_{e^*}$, the conditioning set $H^*$, and the union of cut sets $\mathcal{C}^{H^*}_{e^*}$.
\end{enumerate}

\begin{algorithm}[htb]
\caption{\textsc{BestEdge}} \label{alg:bestedge}
\begin{algorithmic}[1]
\Function{BestEdge}{$\mathcal{G}, \{G_i\}_{i=1}^r, k_{\max}$}
    \State $\Psi^* \gets \infty$
    \ForAll{$(u \to v) \in \mathcal{G}$} \Comment{$u{-}v$ $\Rightarrow$ $u\!\to\!v$, $v\!\to\!u$}
        \State $\mathcal{N}_{uv} \gets \{w \mid w \to v \text{ and } w{-}u \text{ undirected in } \mathcal{G} \}$
        \ForAll{$H \subseteq \mathcal{N}_{uv}$,\; $|H| \le k_{\max}$}
            \State $S \gets (\mathcal{N}_{uv} \setminus H) \cup (\textsc{Parents}(v,\mathcal{G}) \setminus \{u\})$
            \State $(\Psi, \mathcal{C}) \gets \textsc{Criticality}((u\!\to\!v), \{G_i\}, S)$
            \If{$\Psi < \Psi^*$}
                \State $(e^*, H^*, \Psi^*, \mathcal{C}^*) \gets ((u\!\to\!v), H, \Psi, \mathcal{C})$
            \EndIf
        \EndFor
    \EndFor
    \State \Return $(e^*, H^*, \Psi^*, \mathcal{C}^*)$
\EndFunction
\end{algorithmic}
\end{algorithm}

\subsection*{Main Iterative Pruning Scheme} \label{subsec:pruningscheme}
MCBNC removes edges from $\mathcal{G}^+$ greedily, following the BES strategy from GES~\cite{chickering_optimal_2002}. At each step, it deletes the edge with the lowest criticality score $\Psi^H_{(u \to v)}$, provided $\Psi^H_{(u \to v)} \le \theta$. The process stops when no such edge remains. Alternatively, $\theta$, the algorithm can run until $\mathcal{G}^+$ is empty, retaining the structure with minimal average structural distance to the inputs. This enables parameter-free model selection, avoiding the need for predefined treewidth bounds. The complete procedure is summarized in Alg.~\ref{alg:MCBNC}:

\begin{enumerate}
    \item Fuse the input DAGs into $G^+$ using a heuristic ordering as in~\cite{Puerta2021Fusion}.
    \item Convert $G^+$ into its CPDAG $\mathcal{G}^+$ to operate within the equivalence class using \cite{chickering_optimal_2002}.
    \item Repeatedly:
        \begin{enumerate}
            \item Use \textsc{BestEdge} (Alg.~\ref{alg:bestedge}) to find the edge $e^*$ and conditioning set $H^*$ minimizing $\Psi^H_{e^*}$.
            \item If $\Psi^H_{e^*} > \theta$, stop.
            \item Remove $e^*$ using \textsc{Delete}~\cite{chickering_optimal_2002}, update the graphs, and convert to CPDAG.
        \end{enumerate}
\end{enumerate}

\paragraph{Implementation assumptions.}
All edge capacities are assumed to be one. For each candidate edge, all conditioning subsets $H \subseteq \mathcal{N}_{uv}$ of size at most $k_{\max}$ are enumerated. This is feasible since $|\mathcal{N}_{uv}|$ is typically small, and $k_{\max}$ is fixed. The choice of max-flow algorithm is flexible; any correct implementation (e.g., Edmonds-Karp, Dinic) can be used, as the score depends only on the size of the minimum cut. Acyclicity is preserved by applying the \textsc{Delete} operator within the Markov equivalence class.

\subsection*{Properties}
This section states key properties of MCBNC.

\begin{lemma}[Monotonicity of the criticality score]
    Let $\Psi_e^{(t)}$ be the criticality score of edge $e$ after the $t$-th deletion. Then $\Psi_e^{(t+1)} \ge \Psi_e^{(t)}$ for every remaining edge $e$.
\end{lemma}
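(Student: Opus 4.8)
The plan is to argue by induction on the deletion index $t$, so that it suffices to analyze a single step: deleting one edge $e^{*}$, removing its cut edges $\mathcal{C}_i$ from each input $G_i$, and updating the CPDAG $\mathcal{G}$ via \textsc{Delete}, and to show that none of these lowers the score of a surviving edge $e=(u\to v)$. Since $\Psi_e^{H}=\frac1r\sum_{i=1}^r\lvert S_i^{H}\rvert$ is an average of per-graph min-cuts, I would reduce the claim to a graph-by-graph inequality and show that for each $i$ the minimum $u$--$v$ cut in the conditioned moral graph $\widetilde G_i^{H}$ does not decrease across the step. The main structural tool is the elementary monotonicity of min-cut under the subgraph order: if one conditioned moral graph is a subgraph of another on the same pair $(u,v)$, its minimum $u$--$v$ cut is no larger; equivalently, by Menger's theorem, the number of edge-disjoint $u$--$v$ paths is monotone.

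First I would isolate the two sources of change between iterations $t$ and $t+1$: (i) each input DAG loses the edges in $\mathcal{C}_i$, and (ii) \textsc{Delete} rewrites $\mathcal{G}$, which reshapes the valid conditioning set $S=(\mathcal{N}_{uv}\setminus H)\cup(\textsc{Parents}(v,\mathcal{G})\setminus\{u\})$ that governs which vertices are deleted after moralization. For effect (ii) I would record exactly how the \textsc{Delete} operator alters parent sets and $\mathcal{N}_{uv}$, because this determines which vertices are stripped from the moralized ancestral graph; the observation I want is that removing a vertex from the conditioning set \emph{enlarges} the graph on which the cut is measured and can therefore only raise the cut value, whereas a fixed conditioning set together with the edge deletions of (i) yields a subgraph and can only lower it.

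The key step, and the main obstacle, is reconciling these opposite tendencies: deleting cut edges from $G_i$ shrinks $\widetilde G_i^{H}$, while the contraction of the conditioning set re-introduces vertices and paths. The crux is to show that the edges removed in (i) are precisely edges that the conditioning in (ii) would otherwise have blocked, so their deletion cannot strictly reduce the $u$--$v$ max-flow for a surviving edge. I would formalize this as a path-injection argument: every edge-disjoint $u$--$v$ path realizing the min-cut at step $t$ should map to a path at step $t+1$, either unchanged or re-routed through a vertex that step (ii) no longer conditions away, so that no augmenting path is destroyed without a replacement of equal capacity. Verifying that this injection is well defined in all configurations of $e^{*}$ relative to $e$ (in particular when $v$ lies in the conditioning set of the deleted edge and thus changes its parent set) is where the real work lies, and I expect to carry it out with an explicit flow-augmentation argument on each $\widetilde G_i^{H}$ rather than by manipulating cut sets directly.
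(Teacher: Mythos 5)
Your proposal and the paper's proof run in opposite directions, and the gap in yours is exactly the step you defer. The paper's proof is a two-line subgraph-monotonicity argument: deleting an edge can only remove paths in the moralized ancestral subgraphs on which criticality is computed, and since the min-cut equals the number of edge-disjoint $u$--$v$ paths, each per-graph cut (hence the average) \emph{cannot increase}; the paper literally concludes the score is ``monotonic and non-increasing'', i.e.\ $\Psi_e^{(t+1)} \le \Psi_e^{(t)}$, despite the lemma displaying ``$\ge$'' (a statement/proof mismatch internal to the paper). You instead commit to proving the displayed ``$\ge$'', and your whole plan rests on the claim that the cut edges $\mathcal{C}_i$ stripped from each $G_i$ after deleting $e^*$ are ``precisely edges that the conditioning would otherwise have blocked'', so that every flow path of a surviving edge can be injected into a path at step $t+1$. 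You flag this as the real work but never establish it, and it is false in general: the cut sets of $e^*$ are computed to separate the endpoints of $e^*$, and nothing prevents them from containing edges that carry essential $u$--$v$ flow for a different surviving edge $e$.

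Concretely, let $e^*=(w\to v)$ be deleted while $e=(u\to v)$ survives, and let an input DAG $G_1$ contain $u\to w$, $w\to v$, $u\to v$. The moralized ancestral graph of $\{w,v\}$ in $G_1$ is the triangle on $\{u,w,v\}$, and a valid min-cut for $e^*$ is the pair of moral edges corresponding to $w\to v$ and $u\to v$, so both are removed from $G_1$. At step $t+1$ the moralized ancestral graph of $\{u,v\}$ built from $G_1$ has no $u$--$v$ path at all, so $e$'s per-graph cut drops from $2$ to $0$, and the average over the $r$ inputs strictly decreases (the other inputs also only lose edges). The conditioning-set change you hoped would compensate ($w$ leaving $\textsc{Parents}(v)$ in the CPDAG) reintroduces nothing, because conditioning only deletes vertices from graphs that have already lost those edges; shrinking the conditioning set can never recreate edges that were physically removed from $G_1$. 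So no path injection exists and $\Psi_e^{(t+1)}\ge\Psi_e^{(t)}$ fails. Your instinct to use Menger-type monotonicity under the subgraph order is sound, but it proves the reverse bound --- the one the paper's proof actually argues. The genuinely valuable observation in your write-up, that shrinking input graphs and shrinking conditioning sets push the cut in opposite directions, is a fair criticism that the paper's one-liner silently ignores (it makes even the ``$\le$'' direction incompletely justified), but it cannot be resolved in favour of ``$\ge$''.
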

\begin{proof}
    Deleting an edge can only remove paths in the ancestral moral graphs used for computing criticality. Since the min-cut size is determined by the number of edge-disjoint paths between $u$ and $v$, its value cannot increase. Hence, the score is monotonic and non-increasing.
\end{proof}

\begin{corollary}[Score interpretation]
    Let $e = (u\!\to\!v)$ appear in exactly $k$ of the $r$ input DAGs and suppose all $u$-$v$ paths in those DAGs include $e$. Then $\Psi_e = k/r$ and:
    \[
    \theta < k/r \Rightarrow e \text{ is retained}, \quad \theta \ge k/r \Rightarrow e \text{ is removed}.
    \]
\end{corollary}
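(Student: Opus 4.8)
The plan is to verify the claimed closed form $\Psi_e = k/r$ directly from the definition of the criticality score in Alg.~\ref{alg:criticality}, and then read off the retention/removal dichotomy from the stopping rule in Alg.~\ref{alg:MCBNC}. The key observation is that the criticality score is an \emph{average} of per-input min-cut sizes, so I would compute the contribution of each of the $r$ input DAGs separately and then sum.

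First I would split the input DAGs into two groups: the $k$ DAGs that contain the edge $e=(u\to v)$, and the remaining $r-k$ that do not. For a DAG $G_i$ \emph{not} containing $e$, I would argue that $u$ and $v$ either lie in different ancestral components or are connected only through paths that vanish after moralization and removal of the conditioning set; in the cleanest reading, the hypothesis is that $u$ and $v$ carry no mutual support there, so the min-cut separating them in $\widetilde{G}_i^{H}$ is $0$. For each of the $k$ DAGs that \emph{do} contain $e$, the hypothesis ``all $u$-$v$ paths include $e$'' means that after moralization the single arc $e$ is the unique bridge between $u$ and $v$; hence the minimum $u$-$v$ cut has size exactly $1$ (the edge $e$ itself forms a cut, and since a connecting path exists the cut cannot be $0$). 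Summing, $\sum_{i=1}^{r}\lvert S_i^H\rvert = k\cdot 1 + (r-k)\cdot 0 = k$, and dividing by $r$ as in line~7 of Alg.~\ref{alg:criticality} yields $\Psi_e = k/r$.

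With the value of $\Psi_e$ established, the dichotomy follows immediately from the pruning logic: the \textbf{while} loop deletes $e$ precisely when its minimizing score does not exceed $\theta$ and halts otherwise (the ``\textbf{break}'' when $\Psi^* > \theta$). Thus $\theta \ge k/r$ places $e$ at or below the threshold, so it is eligible for removal, whereas $\theta < k/r$ keeps $\Psi_e$ strictly above the threshold, so $e$ survives. I would also invoke the monotonicity of the previous lemma to note that once $e$ attains score $k/r$ under the stated structural condition, this value is the minimum it can ever take during execution, so the retention verdict is stable and well-defined rather than dependent on iteration order.

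The main obstacle I anticipate is making precise \emph{why} the min-cut equals exactly $1$ rather than something larger in the $k$ supporting DAGs, and exactly $0$ in the others. This hinges on carefully tracking what moralization does: moralizing can add edges between co-parents of $v$, potentially creating alternative $u$-$v$ connections even when the original DAG routes all directed paths through $e$. The hypothesis ``all $u$-$v$ paths include $e$'' must therefore be interpreted in the \emph{moralized, conditioned} graph $\widetilde{G}_i^{H}$ (not merely in the raw DAG $G_i$), and the corollary is really an idealized illustrative case where this subtlety does not bite. I would state this interpretive assumption explicitly so that the unique-bridge claim, and hence the clean value $\lvert S_i^H\rvert = 1$, is justified; the remaining arithmetic is then routine.
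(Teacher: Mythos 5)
Your core argument is correct and is, in substance, exactly the justification the paper leaves implicit: the corollary is stated without any proof, as an immediate consequence of the definition of $\Psi$ and the pruning rule. Your decomposition — cut size $1$ in each of the $k$ DAGs containing $e$ (the edge is the unique bridge, so $\{e\}$ is a minimum cut and connectivity forces the cut to be nonzero), cut size $0$ in the remaining $r-k$ DAGs, hence $\Psi_e = \frac{1}{r}\bigl(k\cdot 1 + (r-k)\cdot 0\bigr) = k/r$ — followed by reading the dichotomy off the \textbf{break} condition $\Psi^* > \theta$ in Alg.~\ref{alg:MCBNC}, is the intended reasoning. Your explicit caveat that ``all $u$-$v$ paths include $e$'' must be interpreted in the moralized, conditioned graphs $\widetilde{G}_i^{H}$ (since moralization can create co-parent shortcuts $u{-}w{-}v$ that bypass $e$) is a genuine precision that the paper omits and that is needed for the per-DAG cut to equal exactly $1$.

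One caveat on your closing remark. You invoke the monotonicity lemma to conclude that $k/r$ is ``the minimum [the score] can ever take,'' which follows the inequality as printed in the lemma statement ($\Psi_e^{(t+1)} \ge \Psi_e^{(t)}$). However, the lemma's own proof argues the opposite direction — deletions only remove paths from the ancestral moral graphs, so min-cuts, and hence scores, are \emph{non-increasing} — and that is the physically correct direction. Under non-increasing scores, $k/r$ is a ceiling rather than a floor, so stability of the retention verdict does not follow from monotonicity as you claim: in principle $e$ could later be collaterally removed from one of its $k$ supporting input DAGs as part of another deletion's cut set $\mathcal{C}$, dropping its score below $\theta$. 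This does not damage the corollary itself, which concerns the threshold comparison at the moment $e$ is scored, but the stability gloss should either be dropped or restated with the correct monotonicity direction.
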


\begin{lemma}[Complexity of MCBNC with Ford-Fulkerson]\label{lem:complexity}
    Let $r$ be the number of input DAGs, $m = |E_\sigma^+|$ the number of edges in the unrestricted fusion, and $k_{\max}$ the conditioning-set cap. With unit capacities and Ford-Fulkerson for min-cut, MCBNC runs in $O\bigl(r\,m^3\,2^{k_{\max}}\bigr)$ time and $O(r\,m)$ space.
\end{lemma}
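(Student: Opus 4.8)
The plan is to bound the running time by multiplying the worst-case cost of each nested loop in MCBNC (Alg.~\ref{alg:MCBNC}), \textsc{BestEdge} (Alg.~\ref{alg:bestedge}) and \textsc{Criticality} (Alg.~\ref{alg:criticality}), and to bound the space by the storage of the $r$ input graphs plus reusable scratch memory. I first fix the loop structure: the outer \textbf{while} loop of Alg.~\ref{alg:MCBNC}, each pass of which calls \textsc{BestEdge}; inside \textsc{BestEdge}, a loop over oriented candidate edges and, for each, a loop over admissible conditioning subsets $H$; and inside \textsc{Criticality}, a loop over the $r$ input DAGs, each requiring one ancestral-subgraph extraction, one moralization and one min-cut.

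For the time bound I would account for the factors one at a time. (i) \emph{Iterations:} since BES only deletes, every pass removes at least one edge from a CPDAG that starts with $m$ edges, so there are at most $m$ iterations. (ii) \emph{Oriented edges:} undirected edges are tried in both directions, giving at most $2m = O(m)$ arcs per \textsc{BestEdge} call. (iii) \emph{Subsets:} the sets $H \subseteq \mathcal{N}_{uv}$ with $|H|\le k_{\max}$ number at most $\sum_{j\le k_{\max}}\binom{|\mathcal{N}_{uv}|}{j}$, which under the standing assumption $|\mathcal{N}_{uv}|\le k_{\max}$ is $O(2^{k_{\max}})$. (iv) \emph{Input DAGs:} \textsc{Criticality} loops over all $r$ graphs. (v) \emph{Per-DAG work:} ancestral extraction and moralization are traversals costing $O(m)$ (the latter under the bounded-parent-set regime standard for BNs), and I claim each Ford-Fulkerson min-cut also runs in $O(m)$. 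Multiplying (i)--(v) gives $m\cdot O(m)\cdot O(2^{k_{\max}})\cdot r\cdot O(m)=O(r\,m^{3}\,2^{k_{\max}})$. It then remains to verify that the remaining per-iteration operations, \textsc{DAGtoCPDAG}, \textsc{Delete} and the cut-edge updates, are polynomial in $m$ and $n$ and hence dominated by the $O(r\,m^{2}\,2^{k_{\max}})$ cost of a single \textsc{BestEdge} call.

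The main obstacle is step (v), the Ford-Fulkerson factor, since this is where the exponent is tightest. With unit capacities each augmenting-path search runs in $O(\tilde m)$ time on the residual graph, where $\tilde m = O(m)$ bounds the edges of the conditioned moral graph $\widetilde G_i^{H}$, and each augmentation raises the integral flow by exactly one, so the number of searches equals the max-flow value $f^{*}$, i.e.\ the min-cut size. The naive product $O(\tilde m\cdot f^{*})$ only yields $O(m^{2})$ per cut; to reach the linear $O(m)$ that the stated exponent requires, I would exploit that $f^{*}$ is the number of edge-disjoint $u$--$v$ paths, bounded by $\deg(u)$ in the sparse conditioned graph, and treat this count as $O(1)$ (equivalently, fold a bounded-degree / small-min-cut assumption into the bound). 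Making this precise, or instead substituting a unit-capacity max-flow routine with a proven near-linear bound, is the step that needs the most care; the rest of the accounting is mechanical.

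Finally, for the space bound I would observe that the only persistent storage is the $r$ input DAGs, each with $O(m)$ edges, for $O(r\,m)$ total, together with the fused CPDAG of size $O(m)$. Every min-cut computation allocates a conditioned graph and residual network of size $O(m)$, but this scratch memory is released and reused across the $O(r\,m^{2}\,2^{k_{\max}})$ calls, so it never accumulates. Hence the working set is $O(r\,m)$, as claimed.
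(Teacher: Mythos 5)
Your overall decomposition is the same as the paper's: at most $m$ BES iterations, $O(m)$ oriented candidate arcs per \textsc{BestEdge} call, $O(2^{k_{\max}})$ conditioning subsets per arc, $r$ input DAGs per \textsc{Criticality} call, times the per-DAG cost of one min-cut; your space argument (persistent storage of the $r$ DAGs plus the CPDAG, reusable scratch for residual graphs) is also the paper's. The divergence is exactly where you planted your warning flag: the per-cut cost. The paper charges $O(m^2)$ per Ford--Fulkerson run (at most $f^*\le m$ unit augmentations, each $O(m)$), hence $O(r\,m^2)$ per criticality score; but it then asserts that \textsc{BestEdge} costs $O(r\,m^2\,2^{k_{\max}})$ ``for $2^{k_{\max}}$ subsets per edge and $m$ edges per iteration'' --- the factor of $m$ for looping over the edges is announced but never multiplied in. Carried through consistently, the paper's own figures give $O(r\,m^3\,2^{k_{\max}})$ per \textsc{BestEdge} call and $O(r\,m^4\,2^{k_{\max}})$ total, not the stated bound.

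So the gap you identified is genuine, and it is not one the paper closes --- its proof contains the same hole, disguised as a dropped factor rather than as an explicit assumption. To reach $m^3$ with all five factors present, the per-cut cost must be $O(m)$, which, as you say, amounts to treating $f^*$ as $O(1)$. That does not follow from unit capacities alone: in the conditioned moral ancestral graphs the min-cut between $u$ and $v$ is only bounded by the smaller of their degrees, which can grow with $m$, and even the specialized unit-capacity bound of Even--Tarjan type gives $O(m^{3/2})$ per cut, i.e.\ $O(r\,m^{3.5}\,2^{k_{\max}})$ overall, still short of the claim. Your two proposed repairs --- stating a bounded-degree/bounded-min-cut hypothesis explicitly, or substituting a max-flow routine with a proven near-linear bound --- are precisely what the lemma needs; the remaining honest alternative is to weaken the bound to $O(r\,m^4\,2^{k_{\max}})$. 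Two further side conditions you make explicit (that $|\mathcal{N}_{uv}|\le k_{\max}$ so the subset count is $O(2^{k_{\max}})$ rather than $O(|\mathcal{N}_{uv}|^{k_{\max}})$, and that parent sets are bounded so moralization keeps the conditioned graphs at $O(m)$ edges) are also tacitly required by the paper's accounting; flagging them is a strength of your write-up, not a defect.
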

\begin{proof}
    Each min-cut takes $O(m^2)$ time. A criticality score requires $r$ min-cuts, costing $O(r\,m^2)$. For $2^{k_{\max}}$ subsets per edge and $m$ edges per iteration, \textsc{BestEdge} costs $O(r\,m^2\,2^{k_{\max}})$. The greedy loop runs at most $m$ iterations, giving total time $O(r\,m^3\,2^{k_{\max}})$. Memory is dominated by the CPDAG and $r$ DAGs, each with $O(m)$ edges.
\end{proof}

%
%
\section*{Experimental Methodology} \label{sec:methodology}
We evaluate MCBNC in both synthetic and realistic fusion settings. In both cases, the goal is to recover a consensus DAG $G^*$ that approximates a known gold-standard Bayesian network $G_\text{gs}$. Let $\{G_i\}_{i=1}^r$ denote the input DAGs, obtained either by structural perturbation of $G_\text{gs}$ (synthetic setup) or by learning from data sampled from $G_\text{gs}$ (federated setup).

As a sanity check and to replicate prior work, we first follow and extend the synthetic setup of~\cite{Puerta2021Fusion}, where each $G_i$ is derived by randomly perturbing $G_\text{gs}$. In this idealized case, MCBNC consistently reconstructs $G_\text{gs}$ with near-zero Structural Moral Hamming Distance (SMHD), even for large networks. These results confirm correctness and are reported in the Technical Appendix (Sec.~\ref{sec:appendix_synthetic}).

We then evaluate MCBNC in a more realistic and challenging federated setting. Each of the $r\!\in\!\{5, 10, 20, 30, 50, 100\}$ clients receive a private dataset $D_i$ of 5000 independent and identically distributed (i.i.d.) samples from $G_\text{gs}$ and learns a local DAG $G_i$ using the GES algorithm. The fusion operates solely on the structures $\{G_i\}_{i=1}^r$ without accessing the underlying data $\{D_i\}_{i=1}^r$. The goal is for the consensus network $G^*$ to recover the dependency structure of $G_\text{gs}$, despite the variability introduced by limited-data learning.

As gold standards, we utilize 15 benchmark networks from the \textsc{bnlearn} repository~\cite{Scutari2010BNLearn}, which cover a broad range of sizes and topologies (see Table~\ref{tab:used_BNs}).

\begin{table}[ht]
    \centering
    \setlength{\tabcolsep}{3pt}
    \caption{Benchmark Bayesian networks (nodes/edges).} \label{tab:used_BNs}
    \resizebox{0.7\textwidth}{!}{%
    \begin{tabular}{lrr@{\hspace{1.5em}}lrr@{\hspace{1.5em}}lrr}
        \toprule
        \multicolumn{1}{c}{\textsc{\bfseries Network}} & \textsc{$|V|$} & \textsc{$|E|$} &
        \multicolumn{1}{c}{\textsc{\bfseries Network}} & \textsc{$|V|$} & \textsc{$|E|$} &
        \multicolumn{1}{c}{\textsc{\bfseries Network}} & \textsc{$|V|$} & \textsc{$|E|$} \\
        \midrule
        \textsc{Asia}      &   8 &   8 & \textsc{Mildew}    &  35 &  46 & \textsc{Win95pts}  &  76 & 112 \\
        \textsc{Sachs}     &  11 &  17 & \textsc{Alarm}      &  37 &  46 & \textsc{Pathfinder} & 109 & 195 \\
        \textsc{Child}     &  20 &  25 & \textsc{Barley}     &  48 &  84 & \textsc{Andes}     & 223 & 338 \\
        \textsc{Insurance} &  27 &  52 & \textsc{Hailfinder} &  56 &  66 & \textsc{Diabetes}  & 413 & 602 \\
        \textsc{Water}     &  32 &  66 & \textsc{Hepar2}    &  70 & 123 & \textsc{Pigs}       & 441 & 592 \\
        \bottomrule
    \end{tabular}}
\end{table}

\paragraph{Experimental Protocol.}
For each benchmark network\footnote{BNs \textsc{Sachs} and \textsc{Pigs} are omitted from the main plots because GES already yields their gold-standard DAGs. Consequently, the fusion $G^+$ is optimal, and MCBNC deletes no edges for $\theta < 1$. Detailed results appear in Technical Appendix (Sec.~\ref{subsec:appx-sachs}).} and each $r \in \{5, 10, 20, 30, 50, 100\}$:

\begin{enumerate}[label=(\arabic*), leftmargin=*, nosep]
    \item A collection of $r$ datasets $\{D_i\}_{i=1}^r$ is generated by drawing $5000$ i.i.d. samples from the gold-standard BN. Each $D_i$ is used to learn a local DAG $G_i$ via GES.
    \item The input structures $\{G_i\}_{i=1}^r$ are fused into a DAG $G^+$ using the fusion method of \cite{Puerta2021Fusion}.
    \item MCBNC is executed from $G^+$, iteratively pruning edges. The algorithm produces the full trajectory $\{G^*(\theta)\}$ for all thresholds $\theta$ in a single run.
    \item Steps (2)–(3) are repeated 10 times per configuration, using the same input DAGs, to assess robustness to algorithmic randomness (e.g., tie-breaking, ordering).
    \item Each consensus DAG $G^*(\theta)$ is evaluated using multiple structural and data-based metrics.
\end{enumerate}

\paragraph{Conditioning set size.}
We fix the conditioning-set cap to $k_{\max}\!=\!10$ as an internal constant; it is not a user-tuned parameter. In practice, conditioning sets are small because they derive from nodes adjacent to both endpoints of an undirected edge in the current CPDAG, and their size shrinks as pruning progresses. Ablation results in Technical Appendix (Sec. \ref{subsec:appx-kmax}) confirm that varying $k_{\max}$ has negligible impact on consensus quality or runtime, as large sets are rarely generated.

\paragraph{Evaluation Metrics.} Each consensus DAG $G^*(\theta)$ is assessed using the following criteria:

\begin{itemize}
    \item \textbf{SMHD:} The Structural Moral Hamming Distance~\cite{Kim2019SHDMoral,Torrijos2024_CEC} quantifies structural differences after moralization. We compute the mean SMHD to the gold-standard BN (measuring fidelity) and to the input DAGs (measuring consensus). Lower values are better.

    \item \textbf{BDeu Score:} The Bayesian Dirichlet equivalent uniform score~\cite{chickering_optimal_2002} quantifies data likelihood given the structure. MCBNC ignores this criterion during pruning; we report it only for reference (larger is better).

    \item \textbf{Treewidth:} Indicates structural complexity and governs the cost of exact inference. Lower treewidths are desirable because they imply more tractable models.
\end{itemize}

Technical Appendix (Sec. \ref{sec:appendix_metrics}) provides extended metric definitions and additional structural indicators.

\subsection*{Implementation and Reproducibility} \label{subsec:reproducibility}
All code was implemented in Java (OpenJDK 17) using the \textsc{Tetrad} 7.6.5 causal inference library.\footnote{\url{https://github.com/cmu-phil/tetrad/releases/tag/v7.6.5}} Structure learning was performed with GES. All real-world networks were obtained from the \textsc{bnlearn} repository (see Table~\ref{tab:used_BNs}). Experiments were run on Intel Xeon E5-2650 (8 cores) with 32~GB RAM per run. To ensure full reproducibility, we provide all source code, experiment scripts, and preprocessed datasets on GitHub.\footnote{\url{https://github.com/ptorrijos99/BayesFL}} The datasets are also archived on Zenodo.\footnote{\url{https://doi.org/10.5281/zenodo.14917796}} Statistical tests were carried out using the \textsc{exreport} package~\cite{exreport} for R.

%
%
\section*{Experimental Results} \label{sec:results}  
We present the results of applying MCBNC in the federated learning scenario. Each figure plots performance metrics as a function of the fusion threshold $\theta$. The leftmost point corresponds to the initial fusion $G^+$~\cite{Puerta2021Fusion}, while the rightmost reflects the empty network.

\subsection*{Structural Accuracy (SMHD)}
Fig.~\ref{fig:exp3-SMHD-gold-standard} shows how SMHD of $G^*$ to the gold-standard BN $G_\text{gs}$ varies with the pruning threshold $\theta$ (from $G^+$ on $\theta\!=\!0$ to $\emptyset$ on the last $\theta$). In almost all cases, $G^+$ yields worse SMHD than even the empty DAG, confirming that unrestricted fusion accumulates spurious dependencies and the need for consensus fusions.
Applying MCBNC yields steep SMHD reductions\footnote{An exception is the \textsc{Pathfinder} BN, where SMHD improves monotonically even as the network is pruned to near emptiness. This reflects a structural mismatch in the input DAGs, as GES fails to recover the underlying semi-Naive Bayes structure. This limitation is known in the literature~\cite{Laborda2024_KBS}.}, particularly in large networks like \textsc{Andes} or \textsc{Diabetes}, where improvements over $G^+$ span up to two orders of magnitude. Gains relative to the GES-generated input DAGs are also notable, as MCBNC removes dataset-specific artifacts and consolidates shared dependencies, resulting in BNs that are more similar to $G_\text{gs}$. Performance remains stable across a broad range of $\theta$ values, with over-pruning (and SMHD degradation) occurring near $\theta=1$.

\begin{figure*}[htb]
    \centering
    \includegraphics[width=1\linewidth]{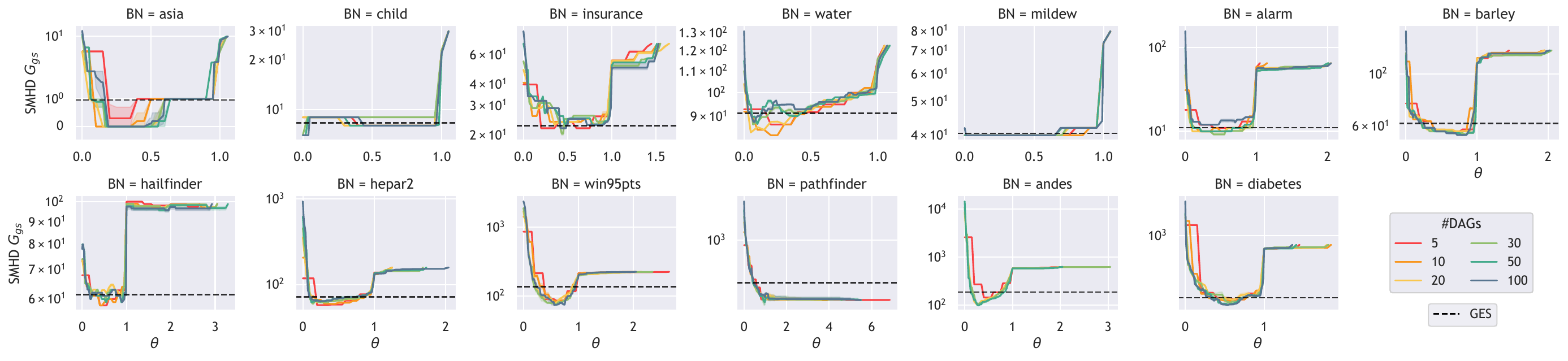}
    \caption{Mean SMHD to the gold-standard BN $G_{\text{gs}}$ across thresholds $\theta$ for each BN. Leftmost point: full fusion $G^+$. Rightmost: empty DAG $\emptyset$. Horizontal line: average SMHD of input BNs from GES to $G_{\text{gs}}$. Lower is better.}
    \label{fig:exp3-SMHD-gold-standard}
\end{figure*}

\subsection*{Data Fit (BDeu Score)}
Fig.~\ref{fig:exp3-BDeu} reports the BDeu scores of the consensus networks across different values of $\theta$. GES optimizes BDeu directly, so its input DAGs perform strongly. MCBNC, by contrast, neither accesses the data nor optimizes any likelihood-based objective. Still, it achieves scores comparable to (and occasionally exceeding) those of the input networks. In some cases, such as the \textsc{Barley} and \textsc{Mildew} BNs, even the gold-standard structure yields lower BDeu. This well-known phenomenon arises because sparser graphs, which correctly reflect the true dependencies, may underfit finite datasets. In \textsc{Pathfinder}, MCBNC again outperforms the gold standard in BDeu, but this does not imply a better structure: SMHD remains high (Fig.~\ref{fig:exp3-SMHD-gold-standard}), confirming that BDeu and structural accuracy do not always align. Overall, MCBNC achieves competitive BDeu scores despite being data-agnostic. Still, selecting an appropriate fusion threshold $\theta$ is crucial.

\begin{figure*}[htb]
    \centering
    \includegraphics[width=1\linewidth]{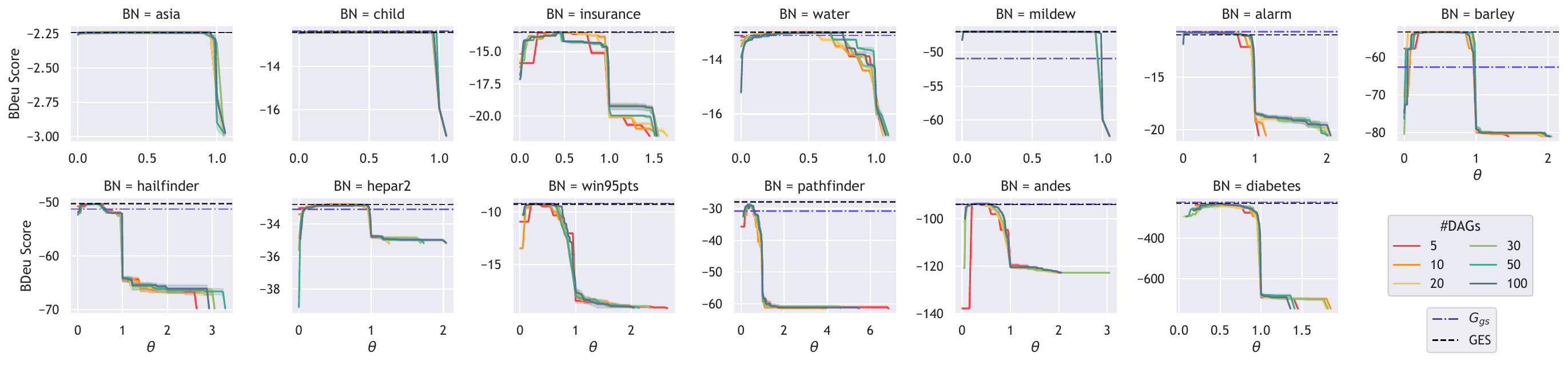}
    \caption{Mean BDeu score across thresholds $\theta$ for each BN. Leftmost point: full fusion $G^+$. Rightmost: empty DAG $\emptyset$. Horizontal lines: average of input BNs from GES (black) and gold-standard BN (purple). Higher is better.}
    \label{fig:exp3-BDeu}
\end{figure*}

\subsection*{Choosing the Fusion Threshold $\theta$}
\label{subsec:theta}
Detailed SMHD-BDeu curves for each client count $r\in\{5,10,20,30,50,100\}$ are reported in Technical Appendix (Sec.~\ref{subsec:appx-threshold}). These plots show that the threshold $\theta$ minimizing SMHD to the input DAGs also tends to maximize structural agreement with the gold-standard network and yields strong BDeu scores. This supports a practical selection strategy: set $\theta$ post hoc to minimize the mean SMHD to the input graphs. This criterion requires no access to data or ground truth, making it suitable for realistic scenarios such as federated learning. Rather than displaying the six SMHD–BDeu curves, we summarize the evidence statistically below.

\paragraph{Method.}
For each benchmark BN and each $r$, we extracted the consensus DAG $G^*(\theta)$ on the point $\theta$ that minimized SMHD to the input GES DAGs $\{G_i\}_{i=1}^r$. Three algorithms were compared: (i) MCBNC ($G^*$) at the selected $\theta$, (ii) the average of the $r$ GES DAGs, and (iii) the unrestricted fusion $G^{+}$. Ranks over benchmarks were analysed with the Friedman test~\cite{Friedman1940} to assess whether all methods perform equally. If the null hypothesis was rejected, pairwise differences were tested using Holm’s post-hoc correction~\cite{Holm1979}. Both tests used $\alpha = 0.01$, following standard practice~\cite{exreport-fuente1,exreport-fuente2}.

\begin{table}[tbp]
 \centering
  \caption{Statistical comparison over 15 BNs and six client counts ($90$ cases). Lower rank is better. $p$-values refer to Holm’s procedure against the top-ranked method; bold values indicate \textbf{non-rejection} of $H_0$ at $\alpha = 0.01$.} \label{tab:stat}
  \resizebox{0.7\columnwidth}{!} {%
    \begin{tabular*}{0.7\columnwidth}{l@{\extracolsep{\fill}}l@{\extracolsep{\fill}}S[table-format=1.2]@{\extracolsep{\fill}}S[table-format=1.2e-2]@{\extracolsep{\fill}}c}
      \toprule
      \textsc{Metric} & \textsc{Method} & \textsc{Rank} & \textsc{$p$-value} & \textsc{W\,/\,T\,/\,L} \\
      \midrule
      \multirow{3}{*}{\textsc{SMHD}}
        & \textbf{MCBNC ($G^*$)}                            & \B 1.40 & {---}              & {---} \\
        & GES $\left(\{\overline{G_i}\}_{i=1}^r\right)$     & 1.91    & 6.95e-04           & 61\,/\,13\,/\,16 \\
        & Fusion ($G^+$)                                    & 2.69    & 7.68e-18           & 68\,/\,17\,/\,5 \\
      \midrule
      \multirow{3}{*}{\textsc{BDeu}}
        & GES $\left(\{\overline{G_i}\}_{i=1}^r\right)$     & \B 1.58 & {---}              & {---} \\
        & \textbf{MCBNC ($G^*$)}                              & 1.70    & \B 4.12e-01        & 48\,/\,9\,/\,33 \\
        & Fusion ($G^+$)                                    & 2.72    & 3.25e-14           & 71\,/\,9\,/\,10 \\
      \bottomrule
    \end{tabular*}
  }
\end{table}

\paragraph{Interpretation.}
The Friedman test rejects the null hypothesis of equal methods for both metrics: $p\!=\!2.32\!\times\!10^{-17}$ for SMHD and $p\!=\!3.66\!\times\!10^{-16}$ for BDeu. Holm’s post-hoc analysis (Table~\ref{tab:stat}) confirms that, for SMHD, MCBNC significantly outperforms both the GES average and the unrestricted fusion. Among the ties, 12 correspond to \textsc{Sachs} and \textsc{Pigs}, where GES already recovers $G_{\text{gs}}$ and no structural improvement is possible. The rest occur in small networks, where differences are minor. For BDeu, MCBNC and GES are statistically indistinguishable ($p\!\approx\!0.41$), while both significantly outperform the unrestricted fusion. This is expected: GES optimizes and overfits BDeu, whereas MCBNC still yields competitive likelihood. These results confirm that selecting $\theta$ by minimizing SMHD to the input GES DAGs yields consensus networks that are structurally faithful and competitive in terms of data fit.

\subsection*{Structural Properties of the Fused Networks}
Fig.~\ref{fig:exp3-TW} plots the treewidth of the consensus BNs as $\theta$ varies (edge-count curves are in Technical Appendix C.1). Pruning with small~$\theta$ eliminates many weak edges, producing an immediate and drastic drop in treewidth. For $\theta\!\in\![0.2,0.8]$ the curve flattens: MCBNC has removed most surplus edges yet still preserves the backbone of dependencies.  
Beyond $\theta\!\approx\!0.9$, relevant edges vanish and treewidth falls again, mirroring the rise in SMHD. The vertical dotted lines mark the selected $\theta$ for each number of clients. At those points, the consensus graphs are never denser (and are frequently sparser) than both the gold-standard and the individual GES models, despite matching or surpassing them in SMHD. Networks such as \textsc{Win95pts} illustrate the benefit: the treewidth drops from approximately $20$ to around $10$, while the mean SMHD to the gold standard improves by $58.7\%$ (Fig. \ref{fig:exp3-SMHD-gold-standard}). 

\begin{figure*}[htb]
    \centering
    \includegraphics[width=1\linewidth]{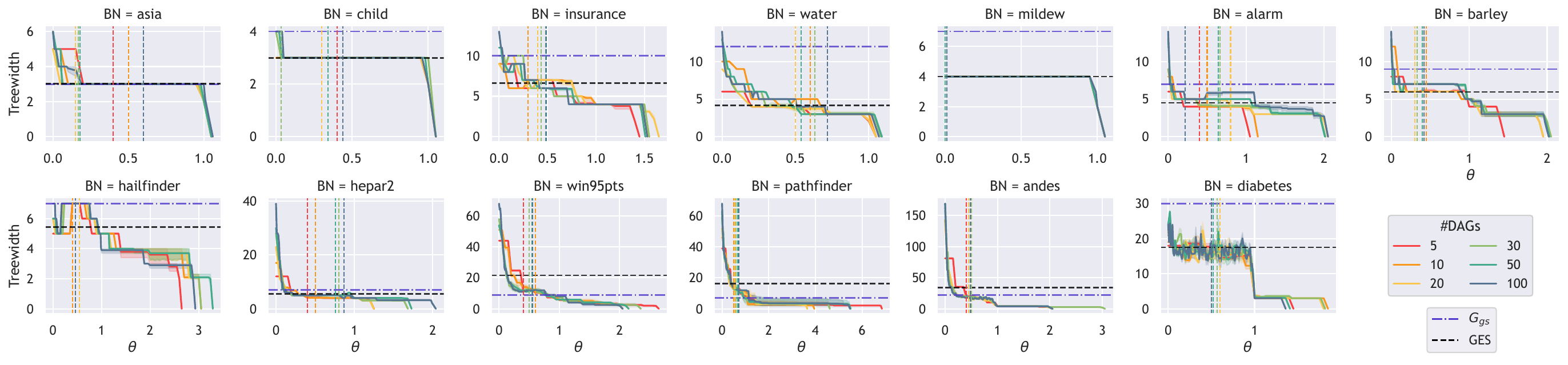}
    \caption{Mean treewidth across pruning thresholds $\theta$ for each BN. Dashed lines: selected~$\theta$ for each $\#$DAGs based on SMHD w.r.t. input BNs. Horizontal lines: average of input BNs from GES (black) and gold-standard BN (purple).}
    \label{fig:exp3-TW}
\end{figure*}

\subsection*{Runtime Comparison with Prior Methods}
Figure~\ref{fig:exp5-timeComparison} shows the runtime of MCBNC compared to the genetic fusion algorithms from~\cite{Torrijos2024_CEC,Torrijos2025_GECCO}, using the same networks and number of input DAGs as in those studies. The algorithm in~\cite{Torrijos2024_CEC} searches over the set \Ea{}, corresponding to arcs in the unrestricted fusion. The method in~\cite{Torrijos2025_GECCO} generalizes this by operating over \Eb{} (all input edges, with repetition) or \Ec{} (without repetition), depending on the chromosome encoding. Despite these differences, all genetic variants show similar scaling. MCBNC is several orders of magnitude faster, making it impractical to replicate our complete evaluation with these algorithms. The reliance on a fixed treewidth in the other methods complicates fair comparisons, as no unique treewidth target applies across networks or aggregation levels. Complete runtime results for MCBNC are provided in the Technical Appendix (Sec.~\ref{subsec:appx-additional}).

\begin{figure}[htb]
    \centering
    \includegraphics[width=0.7\linewidth]{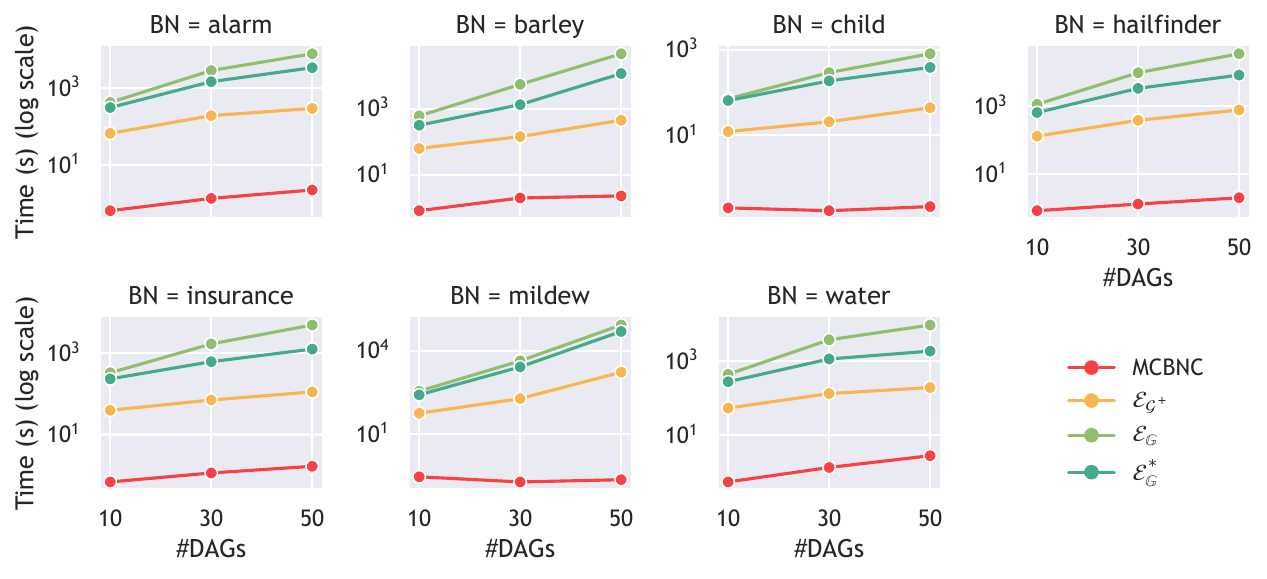}
    \caption{Total execution time vs. number of input DAGs.}
    \label{fig:exp5-timeComparison}
\end{figure}

%
%
\section*{Conclusions}\label{sec:conclusions}
This work introduced the Min-Cut Bayesian Network Consensus (MCBNC) algorithm for structure-level fusion of Bayesian networks. MCBNC overcomes limitations of existing fusion methods by pruning non-essential edges using a backward strategy guided by min-cut analysis. Unlike unrestricted fusion~\cite{Puerta2021Fusion}, which preserves all independencies at the cost of excessive complexity, or bounded approaches requiring a user-defined treewidth~\cite{Torrijos2024_CEC,Torrijos2025_GECCO}, MCBNC offers an interpretable and tunable alternative based on a single threshold $\theta$. Empirically, it consistently yields consensus networks that outperform both the unrestricted fusion and the input BNs in structural fidelity (SMHD), while being simpler and achieving competitive BDeu scores, all without accessing any data. The pruning threshold~$\theta$ can be near-optimally selected using only structural information, making MCBNC applicable in realistic settings.
These properties make MCBNC well-suited to federated scenarios, where local models are learned independently and no data sharing is allowed. It assumes identical node sets. Extending the flow-based score to mixed or evolving variable sets, studying robustness under non-i.i.d. client distributions, and integrating secure aggregation protocols are immediate directions for future research. Additionally, embedding MCBNC into advanced federated frameworks is a promising direction for future research.


\bibliographystyle{ieeetr}
\bibliography{mybibfile}

@book{Jensen_Nielsen,
    author = {Jensen, Finn V. and Nielsen, Thomas D.},
    title = {{Bayesian} {Networks} {and} {Decision} {Graphs}},
    doi = {10.1007/978-0-387-68282-2},
    journal = {Information Science and Statistics},
    year = {2007},
    publisher = {Springer New York},
    edition = {2nd},
}

@book{Koller_Friedman,
    author = {Koller, Daphne and Friedman, Nir},
    title = {{Probabilistic} {Graphical} {Models}: {Principles} {and} {Techniques} - {Adaptive} {Computation} {and} {Machine} {Learning}},
    year = {2009},
    publisher = {The MIT Press},
}

@inproceedings{MeekesRG15,
  author       = {Michelle Meekes and Silja Renooij and Linda C. van der Gaag},
  title        = {{Relevance} {of} {Evidence} {in} {Bayesian} {Networks}},
  booktitle    = {{ECSQARU}-{2015}},
  series       = {Lecture Notes in Computer Science},
  volume       = {9161},
  pages        = {366--375},
  publisher    = {Springer},
  year         = {2015},
  doi          = {10.1007/978-3-319-20807-7_33},
}

@inproceedings{Chandrasekaran2008,
    author = {Chandrasekaran, Venkat and Srebro, Nathan and Harsha, Prahladh},
    title = {Complexity of inference in graphical models},
    year = {2008},
    publisher = {AUAI Press},
    address = {Arlington, Virginia, USA},
    pages = {70–78},
    numpages = {9},
    location = {Helsinki, Finland},
    series = {UAI'08},
    booktitle = {{Proceedings} {of} {the} {Twenty-Fourth} {Conference} {on} {Uncertainty} {in} {Artificial} {Intelligence}},
}

@InProceedings{mcmahan17aFL,
	title = {{Communication-Efficient} {Learning} {of} {Deep} {Networks} {from} {Decentralized} {Data}},
	author = {McMahan, Brendan and Moore, Eider and Ramage, Daniel and Hampson, Seth and Arcas, Blaise Aguera y},
	booktitle = {{Proceedings} {of} {the} {20th} {International} {Conference} {on} {Artificial} {Intelligence} {and} {Statistics}},
	pages = {1273--1282},
	year = {2017},
	editor = {Singh, Aarti and Zhu, Jerry},
	volume = {54},
	series = {Proceedings of Machine Learning Research},
	month = apr,
	publisher = {{PMLR}},
}

@article{Dai2024,
    title = {{A} {two}-{step} {Bayesian} {network}-{based} {process} {sensitivity} {analysi s} {for} {complex} {nitrogen} {reactive} {transport} {modeling}},
    journal = {Journal of Hydrology},
    volume = {632},
    pages = {130903},
    year = {2024},
    doi = {10.1016/j.jhydrol.2024.130903},
    author = {Heng Dai and Jiali Ju and Dongwei Gui and Yan Zhu and Ming Ye and Yujiao Liu and Jiebo Cui and Bill X. Hu}
}

@article{Bernaola2023,
  title = {{Learning} {massive} {interpretable} {gene} {regulatory} {networks} {of} {the} {human} {brain} {by} {merging} {Bayesian} {networks}},
  volume = {19},
  DOI = {10.1371/journal.pcbi.1011443},
  number = {12},
  journal = {PLOS Computational Biology},
  publisher = {Public Library of Science (PLoS)},
  author = {Bernaola, Niko and Michiels, Mario and Larrañaga, Pedro and Bielza, Concha},
  editor = {Li,  Mingyao},
  year = {2023},
  month = dec,
  pages = {e1011443}
}

@article{Angelopoulos_2022,
  doi = {10.1038/s42003-022-03243-w},
  year = {2022},
  month = apr,
  publisher = {Springer Science and Business Media {LLC}},
  volume = {5},
  number = {1},
  author = {Nicos Angelopoulos and Aikaterini Chatzipli and Jyoti Nangalia and Francesco Maura and Peter J. Campbell},
  title = {{Bayesian} {networks} {elucidate} {complex} {genomic} {landscapes} {in} {cancer}},
  journal = {Communications Biology}
}

@article{McLachlan2020,
  title = {{Bayesian} {networks} {in} {healthcare}: {Distribution} {by} {medical} {condition}},
  volume = {107},
  DOI = {10.1016/j.artmed.2020.101912},
  journal = {Artificial Intelligence in Medicine},
  publisher = {Elsevier BV},
  author = {McLachlan,  Scott and Dube,  Kudakwashe and Hitman,  Graham A and Fenton,  Norman E and Kyrimi,  Evangelia},
  year = {2020},
  month = jul,
  pages = {101912}
}

@article{Laborda2024_KBS,
  title = {{Parallel} {structural} {learning} {of} {Bayesian} {networks}: {Iterative} {divide} {and} {conquer} {algorithm} {based} {on} {structural} {fusion}},
  volume = {296},
  DOI = {10.1016/j.knosys.2024.111840},
  journal = {Knowledge-Based Systems},
  publisher = {Elsevier BV},
  author = {Laborda, Jorge D. and Torrijos, Pablo and Puerta, José M. and Gámez, José A.},
  year = {2024},
  month = jul,
  pages = {111840}
}

@inproceedings{Torrijos2024_DS,
  author="Torrijos, Pablo and G{\'a}mez, Jos{\'e} A. and Puerta, Jos{\'e} M.",
  editor="Dino Pedreschi and Anna Monreale and Riccardo Guidotti and Roberto Pellungrini and Francesca Naretto",
  title="{FedGES}: {A} {Federated} {Learning} {Approach} {for} {Bayesian} {Network} {Structure} {Learning}",
  booktitle="{Discovery} {Science} – {DS} {2024}",
  year="2025",
  publisher="Springer Nature Switzerland",
  address="Cham",
}

@article{chickering_optimal_2002,
	author = {Chickering, David Maxwell},
	title = {{Optimal} {Structure} {Identification} {With} {Greedy} {Search}},
	year = {2002},
	volume = {3},
	journal = {Journal of Machine Learning Research},
	pages = {507--554},
	url = {http://www.jmlr.org/papers/v3/chickering02b.html},
	number = {Nov},
}

@article{Puerta2021Fusion,
  doi = {10.1016/j.inffus.2020.09.003},
  year = 2021,
  month = feb,
  publisher = {Elsevier {BV}},
  volume = {66},
  pages = {155--169},
  author = {Jos{\'{e}} M. Puerta and Juan A. Aledo and Jos{\'{e}} A. G{\'{a}}mez and Jorge D. Laborda},
  title = {{Efficient} {and} {accurate} {structural} {fusion} {of} {Bayesian} {networks}},
  journal = {Information Fusion} 
}

@article{pena_finding_2011,
	title = {{Finding} {Consensus} {Bayesian} {Network} {Structures}},
	volume = {42},
	doi = {10.1613/jair.3427},
	journal = {The Journal of Artificial Intelligence Research (JAIR)},
	author = {Pe\~na, J.M.},
	month = jan,
	year = {2011}
}

@inproceedings{Torrijos2024_CEC,
  title = {{Structural} {Fusion} {of} {Bayesian} {Networks} {with} {Limited} {Treewidth} {Using} {Genetic} {Algorithms}},
  volume = {3},
  DOI = {10.1109/cec60901.2024.10611976},
  booktitle = {{2024} {IEEE} {Congress} {on} {Evolutionary} {Computation} ({CEC})},
  publisher = {IEEE},
  author = {Torrijos, Pablo and Gámez, José A. and Puerta, José M.},
  year = {2024},
  month = jun,
  pages = {1–8}
}

@inproceedings{Torrijos2025_GECCO,
    author = {Torrijos, Pablo and G\'{a}mez, Jos\'{e} A. and Puerta, Jos\'{e} M. and Aledo, Juan A.},
    title = {Genetic Algorithms for Tractable Bayesian Network Fusion via Pre-Fusion Edge Pruning},
    year = {2025},
    publisher = {Association for Computing Machinery},
    address = {New York, NY, USA},
    doi = {10.1145/3712256.3726333},
    booktitle = {Proceedings of the Genetic and Evolutionary Computation Conference 2025},
    pages = {481–489},
    numpages = {9}
}

@book{Ahuja1993,
author = {Ahuja, Ravindra K. and Magnanti, Thomas L. and Orlin, James B.},
title = {Network flows: theory, algorithms, and applications},
year = {1993},
publisher = {Prentice-Hall, Inc.},
address = {USA}
}

@article{Ford1956,
  title = {Maximal Flow Through a Network},
  volume = {8},
  DOI = {10.4153/cjm-1956-045-5},
  journal = {Canadian Journal of Mathematics},
  publisher = {Canadian Mathematical Society},
  author = {Ford,  L. R. and Fulkerson,  D. R.},
  year = {1956},
  pages = {399–404}
}

@article{Kim2019SHDMoral,
  year = {2019},
  month = jul,
  publisher = {Springer Science and Business Media {LLC}},
  volume = {30},
  number = {2},
  pages = {331--349},
  author = {Gang-Hoo Kim and Sung-Ho Kim},
  title = {Marginal information for structure learning},
  doi = {10.1007/s11222-019-09877-x},
  journal = {Statistics and Computing}
}

@article{Scutari2010BNLearn,
  author  = {Marco Scutari},
  title   = {Learning {B}ayesian Networks with the {bnlearn} Package},
  journal = {Journal of Statistical Software},
  year    = {2010},
  volume  = {35},
  number  = {3},
  pages = {1--22},
  doi = {10.18637/jss.v035.i03},
}

@article{exreport-fuente1,
    author = {Demsar, Janez},
    year = {2006},
    month = {01},
    pages = {1-30},
    title = {Statistical Comparisons of Classifiers over Multiple Data Sets},
    volume = {7},
url = {https://jmlr.org/papers/v7/demsar06a.html},
    journal = {Journal of Machine Learning Research}
}

@article{exreport-fuente2,
  title={An Extension on "Statistical Comparisons of Classifiers over Multiple Data Sets" for all Pairwise Comparisons},
  author={Salvador Garc{\'i}a and Francisco Herrera},
  journal={Journal of Machine Learning Research},
url = {https://www.jmlr.org/papers/v9/garcia08a.html},
  year={2008},
  volume={9},
  pages={2677-2694}
}

@Manual{exreport,
    title = {exreport: Fast, Reliable and Elegant Reproducible Research},
    author = {Jacinto Arias and Javier Cozar},
    year = {2016},
    note = {{R} package version 0.4.1},
    url = {https://CRAN.R-project.org/package=exreport},
}

@article{Friedman1940,
  year = {1940},
  month = mar,
  publisher = {Institute of Mathematical Statistics},
  volume = {11},
  number = {1},
  pages = {86--92},
  author = {Milton Friedman},
  title = {A Comparison of Alternative Tests of Significance for the Problem of $m$ Rankings},
  journal = {The Annals of Mathematical Statistics}
}

@article{Holm1979,
  title={A Simple Sequentially Rejective Multiple Test Procedure},
  author={Sture Holm},
  journal={Scandinavian Journal of Statistics},
  year={1979},
  volume={6},
  pages={65-70}
}

\section*{Acknoledgements}
This work was supported by SBPLY/21/180225/000062 (Junta de Comunidades de Castilla-La Mancha and ERDF A way of making Europe); PID2022-139293NB-C32 (MICIU/AEI/10.13039/501100011033 and ERDF, EU); FPU21/01074 (MICIU/AEI/10.13039/501100011033 and ESF+); 2025-GRIN-38476 (Universidad de Castilla-La Mancha and ERDF A way of making Europe); TED2021-131291B-I00 (MICIU/AEI/10.13039/501100011033 and European Union NextGenerationEU/PRTR).

Camera-ready version accepted at AAAI-26. The official proceedings version is published with DOI: \url{https://doi.org/---/---}.


\appendix
\renewcommand{\thesection}{\Alph{section}}
\renewcommand{\thefigure}{\thesection\arabic{figure}}
\setcounter{figure}{0}
\setcounter{table}{0}

\section*{Technical Appendix}
\addcontentsline{toc}{section}{Technical Appendix}

This technical appendix supplements the paper ``Bayesian Network Structural Consensus via Greedy Min-Cut \mbox{Analysis}'' and includes:
\begin{itemize}
    \item Appendix A: Extended Metric Definitions.
    \item Appendix B: Experimental Evaluation with Synthetic BNs.
    \item Appendix C: Extended Federated Learning Experimental Results.
    \item Appendix D: Ford-Fulkerson Algorithm.
    \item Appendix E: Illustrative Example of MCBNC Algorithm.
\end{itemize}

%
%
\section{Extended Metric Definitions} \label{sec:appendix_metrics}

This section provides extended definitions and interpretations of the evaluation metrics used to assess the quality of the consensus Bayesian Networks (BNs) generated by MCBNC. In addition to the structural and data-fit metrics introduced in Section 4.2, we also report edge count and execution time, offering a broader characterization of model complexity and scalability.

\paragraph{Structural Moral Hamming Distance (SMHD):}  
SMHD quantifies the structural similarity between two networks by comparing their moral graphs, which better reflect the conditional independencies encoded in the DAGs. Unlike the Structural Hamming Distance (SHD), which counts directed arc differences, SMHD measures discrepancies in undirected moralized structures \cite{Kim2019SHDMoral,Torrijos2024_CEC}. A lower SMHD indicates closer agreement in the underlying dependency structure. We compute SMHD in two ways: (i) relative to the input DAGs $\{G_i\}_{i=1}^r$, used as a proxy for training accuracy, and (ii) relative to the gold-standard BN, interpreted as a test-time score. The fusion process is deemed structurally beneficial if the consensus BN improves over the average GES networks in SMHD to the gold standard.

\paragraph{Bayesian Dirichlet equivalent uniform (BDeu) Score:}  
This score measures how well a BN structure fits observed data \cite{chickering_optimal_2002}. Although commonly used in structure learning, BDeu is known to be sensitive to overfitting, as it rewards structures that match the empirical distribution, even when the encoded independencies are not meaningful. MCBNC does not optimize for BDeu directly (it has no access to the data), so BDeu is used purely as a post-hoc evaluation metric. Higher scores indicate a better data fit, but must be interpreted in conjunction with structural metrics, such as SMHD.

\paragraph{Number of Edges:}  
Edge count offers a coarse but useful measure of structural complexity. Dense networks tend to be harder to interpret and may capture spurious relationships (overfitting), while too few edges may miss critical dependencies (underfitting). Although edge count often correlates with treewidth, this is not guaranteed; some sparse networks can still have high treewidth due to their specific connectivity patterns.

\paragraph{Treewidth:}  
Treewidth reflects the tractability of inference in a BN. It is defined as the size of the largest clique in a triangulated moral graph minus one. Exact inference is exponential in the treewidth, so minimizing it is desirable. In contrast to prior fusion methods that impose explicit treewidth bounds \cite{Torrijos2024_CEC,Torrijos2025_GECCO}, MCBNC reduces treewidth organically through pruning, without fixing a maximum bound.

\paragraph{Execution Time:}  
We also measure the cumulative execution time of MCBNC as a function of $\theta$. Runtime reflects the algorithm's scalability and depends on both the number of input DAGs and the size/complexity of each BN. Most of the computation is concentrated in early iterations (i.e., low $\theta$), where more pruning occurs. Runtime is also sensitive to the maximum subset size $k_{\max}$ used for evaluating min-cut criticality. Reducing this parameter can improve performance if necessary.

%
%
\section{Experimental Evaluation with Synthetic BNs} \label{sec:appendix_synthetic}

To validate our method and replicate the original fusion study from \cite{Puerta2021Fusion}, we conduct synthetic experiments using their generation protocol, which has been extended to larger networks and more input DAGs. Each experiment begins with a base DAG $G_0$ generated randomly with $n \in \{10, 30, 50, 100\}$ nodes. From this ground-truth network, we derive $r \in \{10, 30, 50, 100\}$ input DAGs $\{G_1, \dots, G_r\}$ by applying $p = n \cdot 0.75$ random structural perturbations per DAG. Each perturbation randomly adds or deletes an edge $x \to y$, ensuring that the resulting graph remains acyclic. We enforce structural constraints during this process to maintain a maximum of three parents and four children per node, and a total of at most $e = n \cdot 2.5$ edges per graph. These perturbed DAGs serve as input for the MCBNC algorithm, which produces a consensus structure $G^*$. 

\begin{figure}[htb]
    \centering
    \begin{subfigure}[b]{0.49\textwidth}
        \centering
        \includegraphics[width=\linewidth]{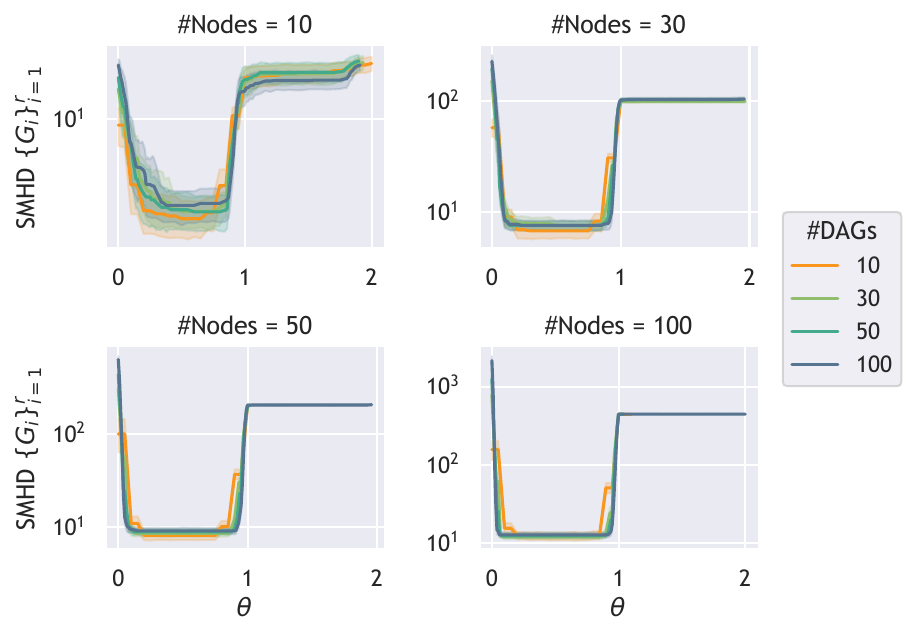}
        \caption{SMHD to input DAGs. Lower is better.}
        \label{fig:exp1-SMHD-input}
    \end{subfigure} \hfill
    \begin{subfigure}[b]{0.49\textwidth}
        \centering
        \includegraphics[width=\linewidth]{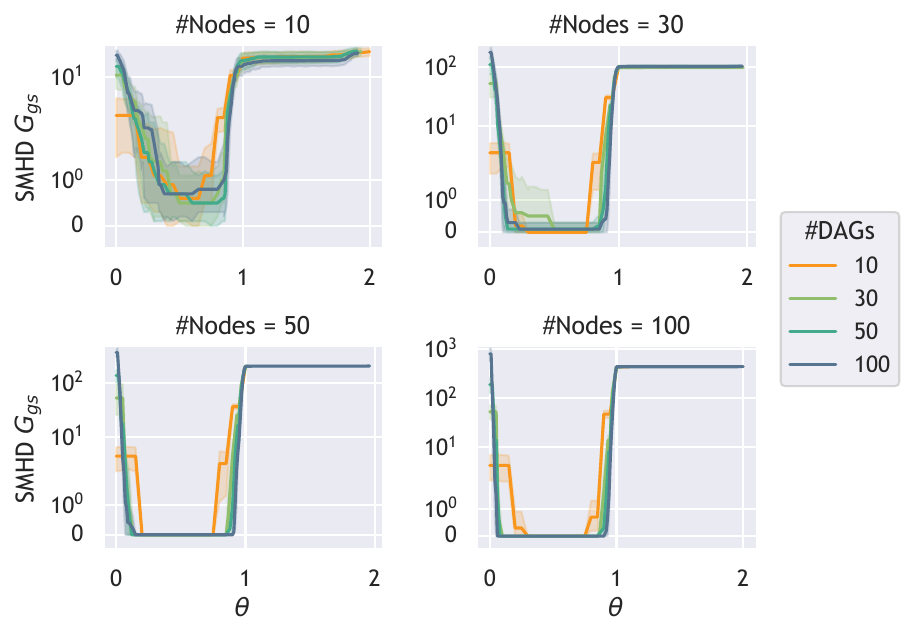}
        \caption{SMHD to gold-standard DAG $G_0$.}
        \label{fig:exp1-SMHD-gold-standard}
    \end{subfigure}
    \caption{Mean SMHD for synthetic experiments across different values of $\theta$.}
    \label{fig:exp1-SMHD-comparison}
\end{figure}

Figures~\ref{fig:exp1-SMHD-input} and \ref{fig:exp1-SMHD-gold-standard} report Structural Moral Hamming Distance (SMHD) between the consensus DAG $G^*$ and, respectively, the input DAGs $\{G_1, \dots, G_r\}$ and the ground truth $G_0$. As expected, increasing the number of input DAGs and nodes results in denser unrestricted fusion networks ($G^+$ at $\theta = 0$), which diverge from individual inputs. In contrast, MCBNC consistently yields compact and stable consensus structures. SMHD values drop rapidly for small $\theta$ (e.g., $\theta < 0.25$) and remain low across a wide interval. Conversely, as $\theta$ increases and relevant edges begin to be pruned (e.g., $\theta > 0.75$), the SMHD rises sharply.

Figure~\ref{fig:exp1-SMHD-gold-standard} shows that MCBNC also recovers the original structure $G_0$ with high fidelity, despite only observing perturbed inputs. Except for the smallest case ($n = 10$), perfect recovery (SMHD = 0) is typically achieved for $\theta \in [0.25, 0.75]$, indicating that $\theta = 0.5$ is a reasonable default in this setting.

The close alignment between the two SMHD curves confirms that optimizing for structural agreement with the input networks also improves the recovery of the actual underlying structure. These results validate the method’s behaviour under controlled conditions and support the findings reported in the main paper for the federated learning setting.

\FloatBarrier
\clearpage
%
%
\section{Extended Federated Learning Experimental Results} \label{sec:appendix_fl}
This section presents extended results for the federated learning experiments described in Section \textbf{Experimental Results}. These analyses cover a broader range of $r$ values, include additional structural metrics, and report on networks omitted from the main paper.

\subsection{Additional Metrics}\label{subsec:appx-additional}
We report three additional properties of the fused networks: SMHD to the input DAGs, edge count, and complete execution time.

\begin{figure*}[htb]
    \centering
    \includegraphics[width=1\linewidth, trim=0 0 0 0]{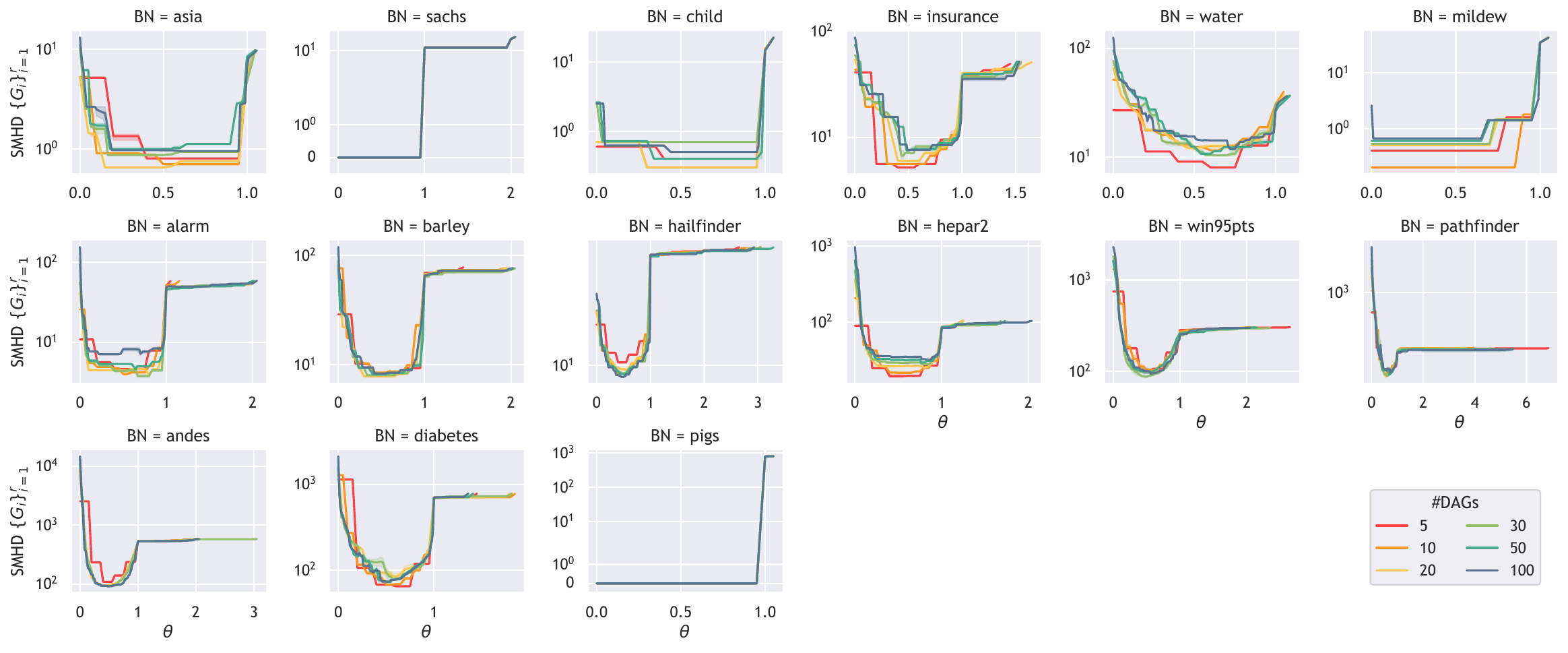}
    \caption{Mean SMHD between the consensus DAGs and the input DAGs, across pruning thresholds $\theta$. Lower is better.}
    \label{fig:exp3-SMHD-input}
\end{figure*}

\paragraph{Structural Agreement with Input DAGs.}
Figure~\ref{fig:exp3-SMHD-input} shows the mean SMHD between each consensus DAG and the input DAGs $\{G_i\}_{i=1}^r$, as a function of the pruning threshold $\theta$. This metric reflects structural consensus across participants and complements the SMHD-to-gold-standard curves in the main paper. As expected, the SMHD to the inputs increases with pruning. However, the minimum often coincides with the same $\theta$ that yields the best performance against the gold standard (see Fig. 1 in the main paper), validating the use of this metric for threshold selection when no reference model is available.

\begin{figure*}[htb]
    \centering
    \includegraphics[width=1\linewidth, trim=0 0 0 0]{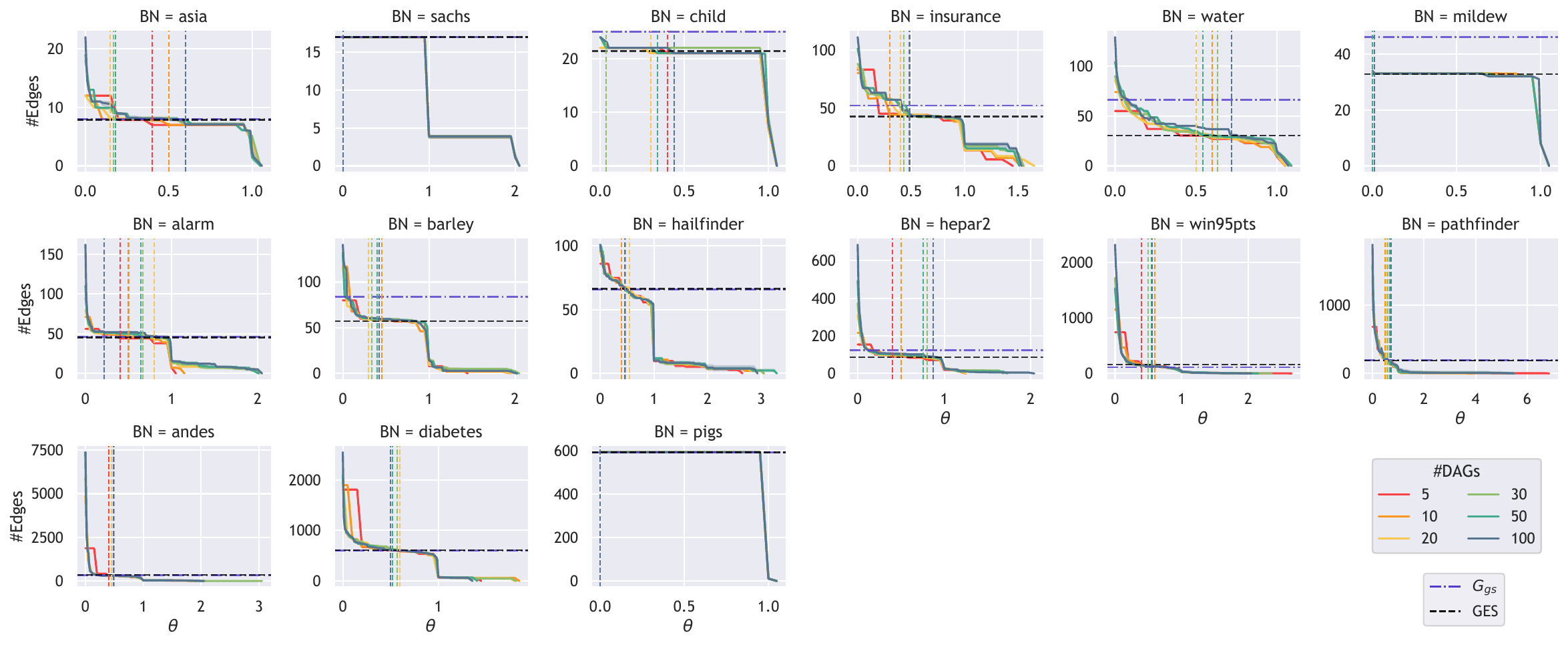}
    \caption{Mean edges across pruning thresholds $\theta$ for each BN. Dashed lines: selected~$\theta$ for each $\#$DAGs based on SMHD w.r.t. input BNs. Horizontal lines: average of input BNs from GES (black) and gold-standard BN (purple).}
    \label{fig:exp3-Edges}
\end{figure*}

\paragraph{Edge Count.}
Figure~\ref{fig:exp3-Edges} shows the number of edges in the consensus BNs for different values of $\theta$. The trends closely mirror those observed for treewidth in the main paper. At the empirically selected pruning thresholds, the number of edges in the fused networks aligns closely with that of both the GES-generated and gold-standard BNs. This confirms that MCBNC avoids the excessive overconnection typical of unrestricted fusion, while retaining the key structural features of the original.

\begin{figure*}[htb]
    \centering
    \includegraphics[width=1\linewidth, trim=0 0 0 0]{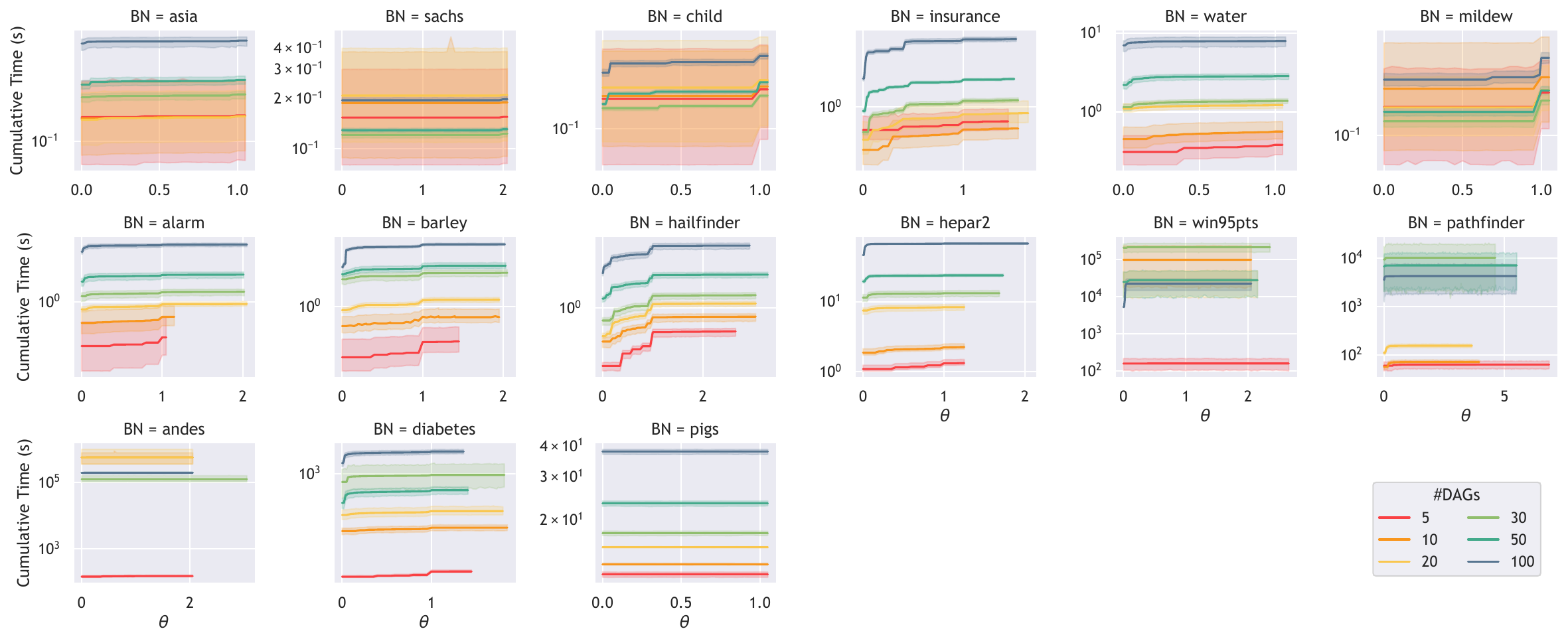}
    \caption{Cumulative execution time of MCBNC (in seconds) as a function of $\theta$.}
    \label{fig:exp3-Time}
\end{figure*}

\paragraph{Execution Time.}
Figure~\ref{fig:exp3-Time} shows the cumulative runtime of MCBNC (in seconds) for different values of $\theta$. Runtime increases with both the number and size of the input DAGs. In large or complex networks such as \textsc{Andes}, \textsc{Win95pts}, and \textsc{Pathfinder}, runtime is dominated by costly min-cut evaluations involving larger conditioning sets or intricate structures. Notably, most of the execution time is concentrated in early iterations (i.e., low $\theta$), when most pruning decisions are made. Variability is higher when runtimes are small (around one second or less), but tends to stabilize as runtimes increase. As $\theta$ grows, fewer edges are eligible for removal, and each iteration becomes progressively cheaper.

\FloatBarrier
\clearpage
\subsection{Threshold Selection Curves}\label{subsec:appx-threshold}
In Section~\textbf{Choosing the Fusion Threshold~$\theta$} of the main paper, we propose selecting $\theta$ by minimizing the Structural Moral Hamming Distance (SMHD) to the input DAGs. This proxy is computable without access to data or a gold standard, and the statistical analysis in the main paper shows that it yields structurally and statistically reliable results.

This appendix provides further justification for that strategy by reproducing the full SMHD–BDeu trade-off curves for all client counts $r \in \{5, 10, 20, 30, 50, 100\}$. Each plot shows how SMHD (to both the gold standard and the inputs) and normalized BDeu evolve as functions of the pruning threshold $\theta$. The BDeu scores are computed on a shared test dataset and averaged per benchmark.

Across all values of $r$ (Figures~\ref{fig:exp3-SMHD-BDeu-5DAGs}–\ref{fig:exp3-SMHD-BDeu-100DAGs}), we observe the same pattern: the value of $\theta$ that minimizes SMHD to the input DAGs almost always (i) minimizes or nearly minimizes SMHD to the gold-standard DAG, and (ii) achieves near-optimal BDeu scores. That is, selecting $\theta$ solely based on structural agreement with the inputs leads to a consensus structure that generalizes well, both structurally and in terms of likelihood. This validates the proposed threshold selection rule: a simple structural criterion, evaluated post hoc on the input graphs, suffices to guide model selection.

\begin{figure*}[htb]
    \centering
    \includegraphics[width=1\linewidth, trim=0 0 0 0]{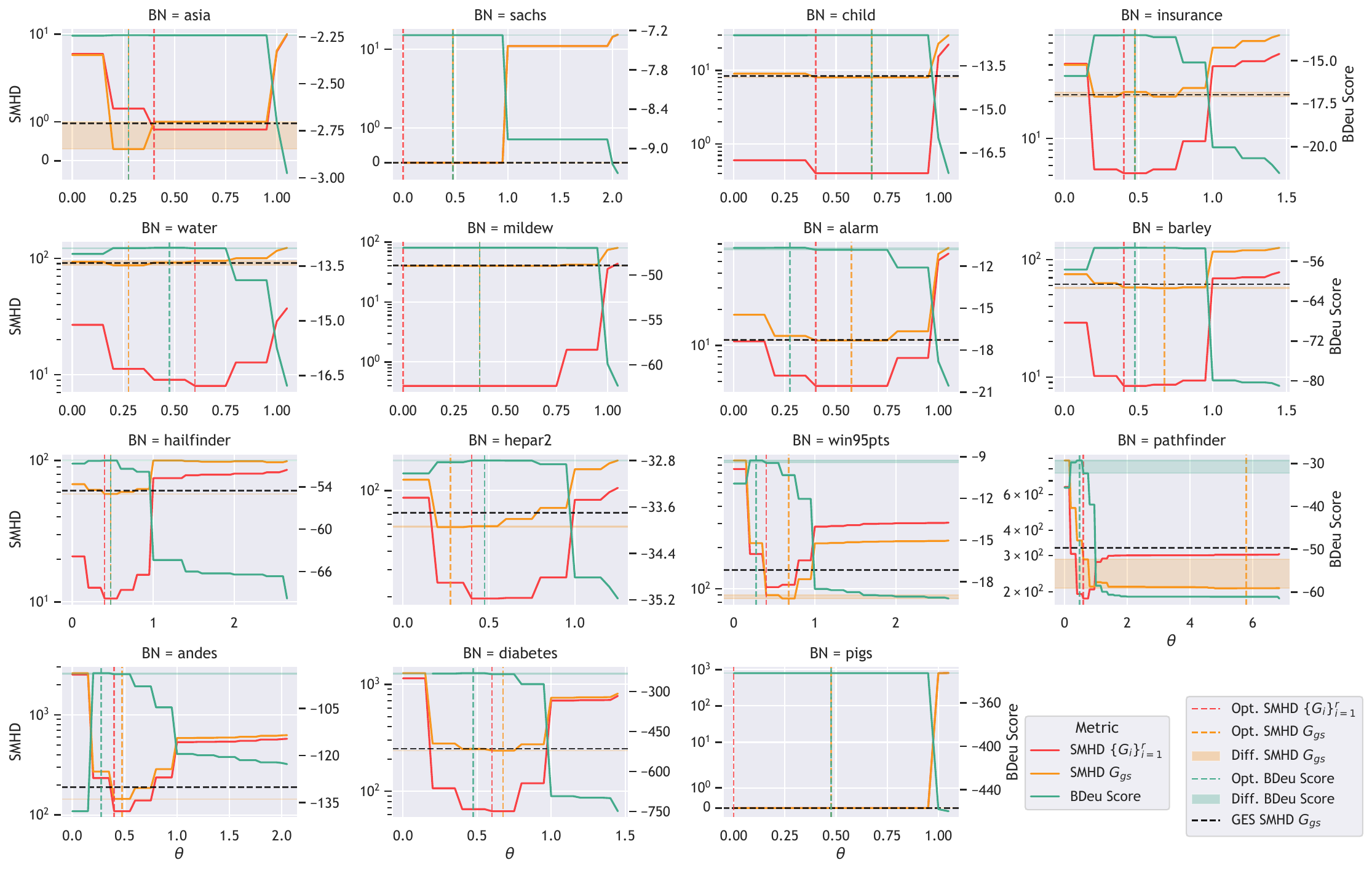}
    \caption{SMHD relative to GES-generated and gold standard BNs (left scale) and normalized BDeu score (right scale), using 5 DAGs.}
    \label{fig:exp3-SMHD-BDeu-5DAGs}
\end{figure*}

\begin{figure*}[htb]
    \centering
    \includegraphics[width=1\linewidth, trim=0 0 0 0]{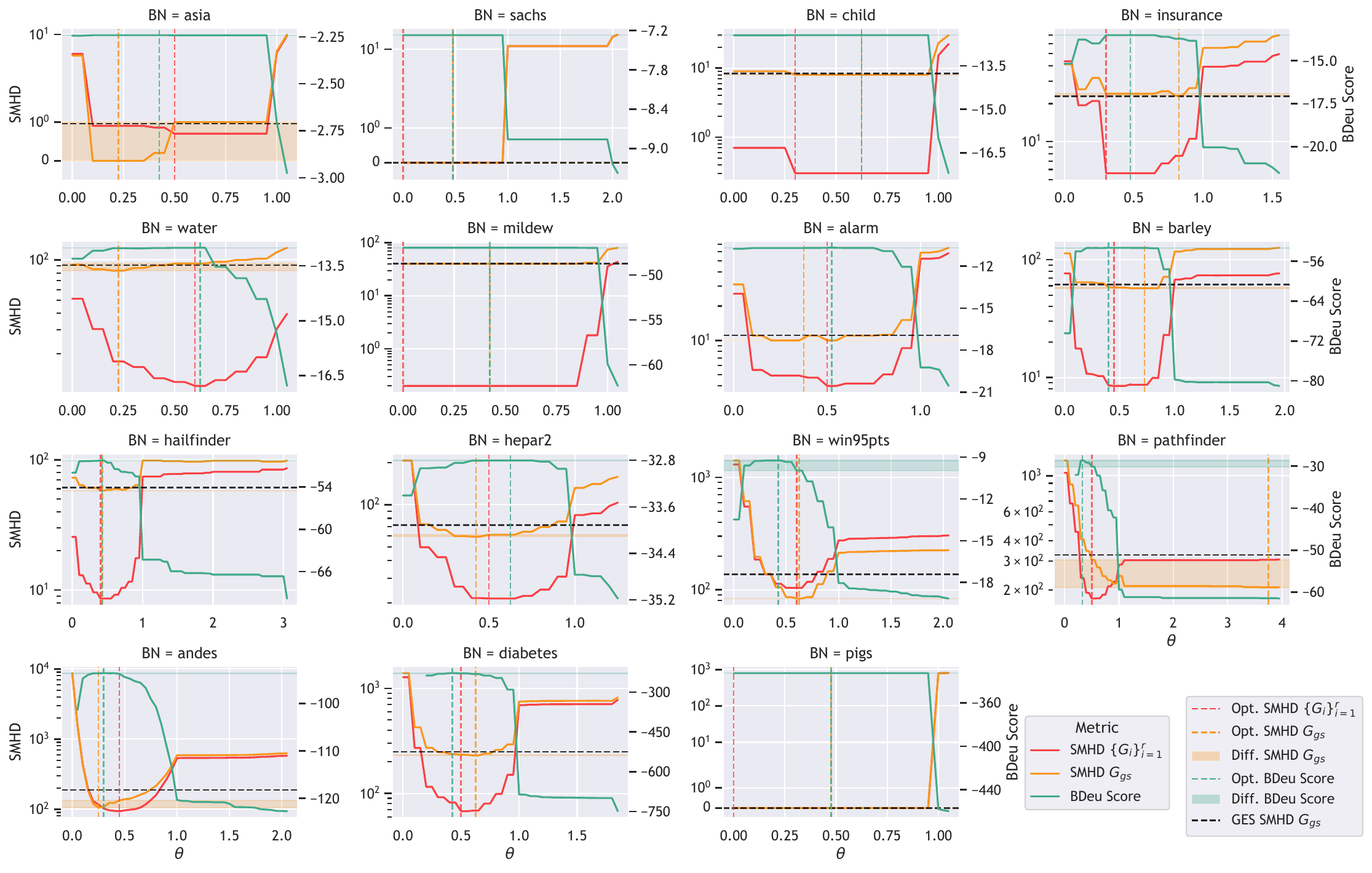}
    \caption{SMHD relative to GES-generated and gold standard BNs (left scale) and normalized BDeu score (right scale), using 10 DAGs.}
    \label{fig:exp3-SMHD-BDeu-10DAGs}
\end{figure*}

\begin{figure*}[htb]
    \centering
    \includegraphics[width=1\linewidth, trim=0 0 0 0]{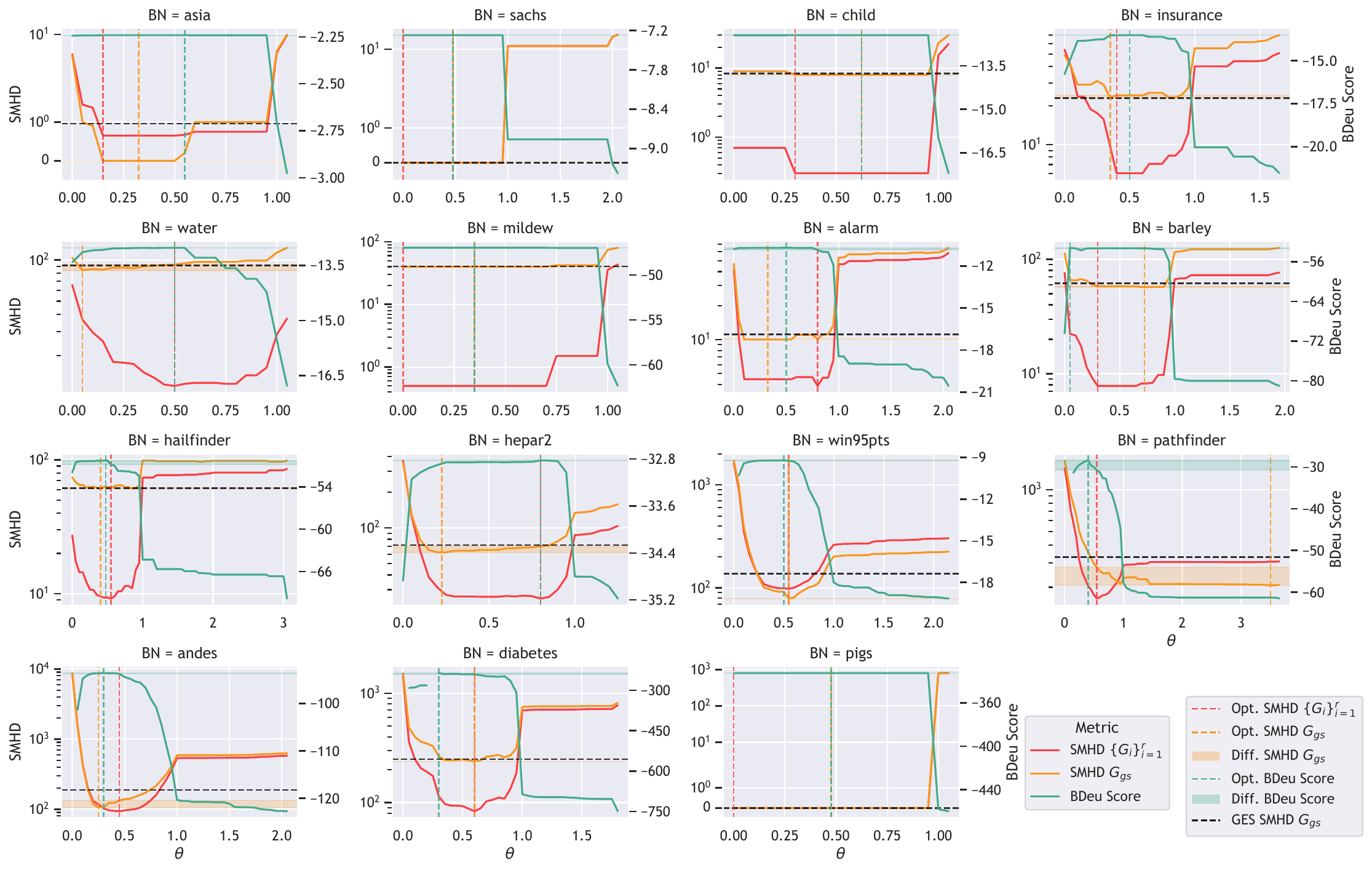}
    \caption{SMHD relative to GES-generated and gold standard BNs (left scale) and normalized BDeu score (right scale), using 20 DAGs.}
    \label{fig:exp3-SMHD-BDeu-20DAGs}
\end{figure*}

\begin{figure*}[htb]
    \centering
    \includegraphics[width=1\linewidth, trim=0 0 0 0]{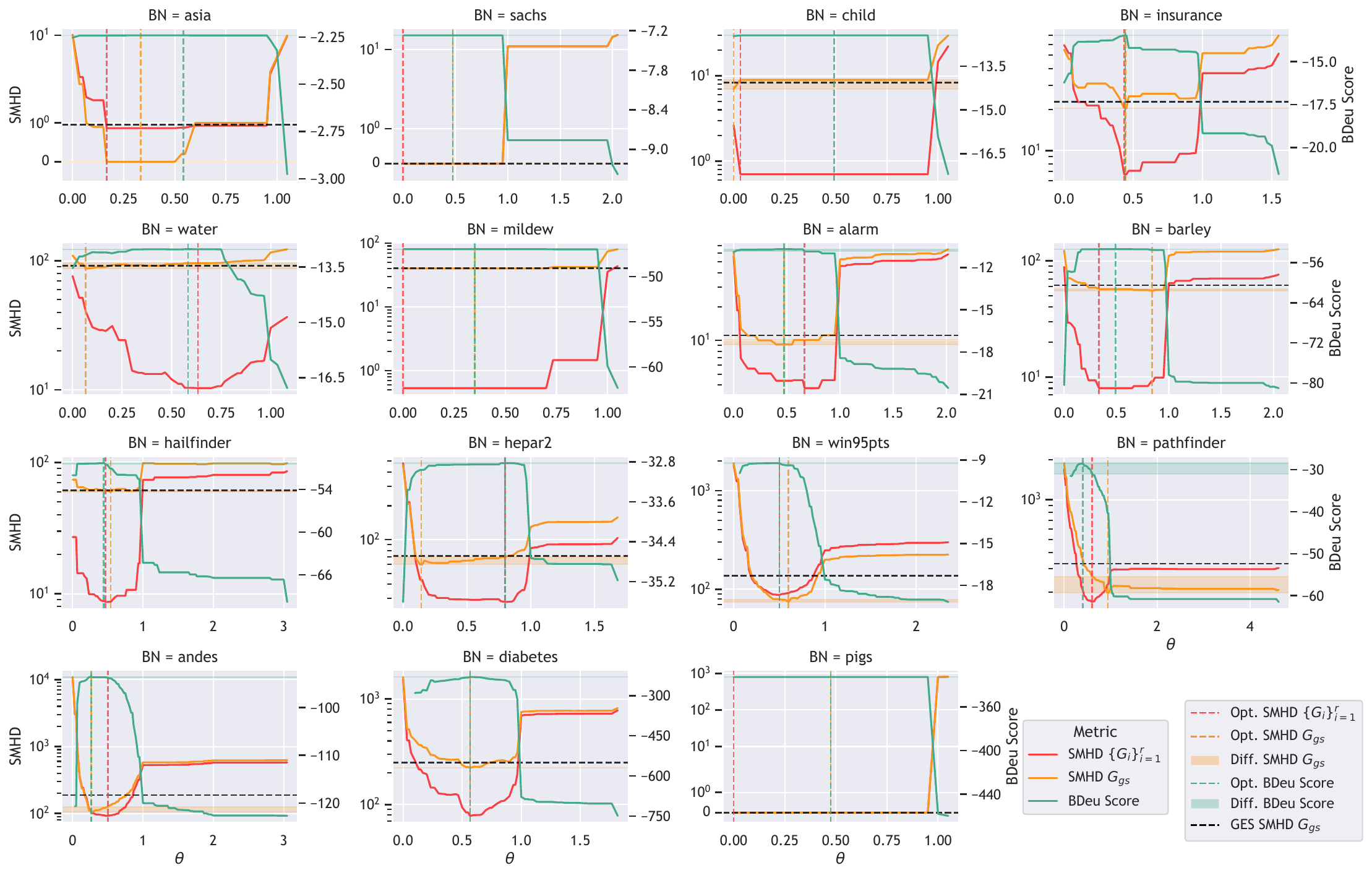}
    \caption{SMHD relative to GES-generated and gold standard BNs (left scale) and normalized BDeu score (right scale), using 30 DAGs.}
    \label{fig:exp3-SMHD-BDeu-30DAGs}
\end{figure*}

\begin{figure*}[htb]
    \centering
    \includegraphics[width=1\linewidth, trim=0 0 0 0]{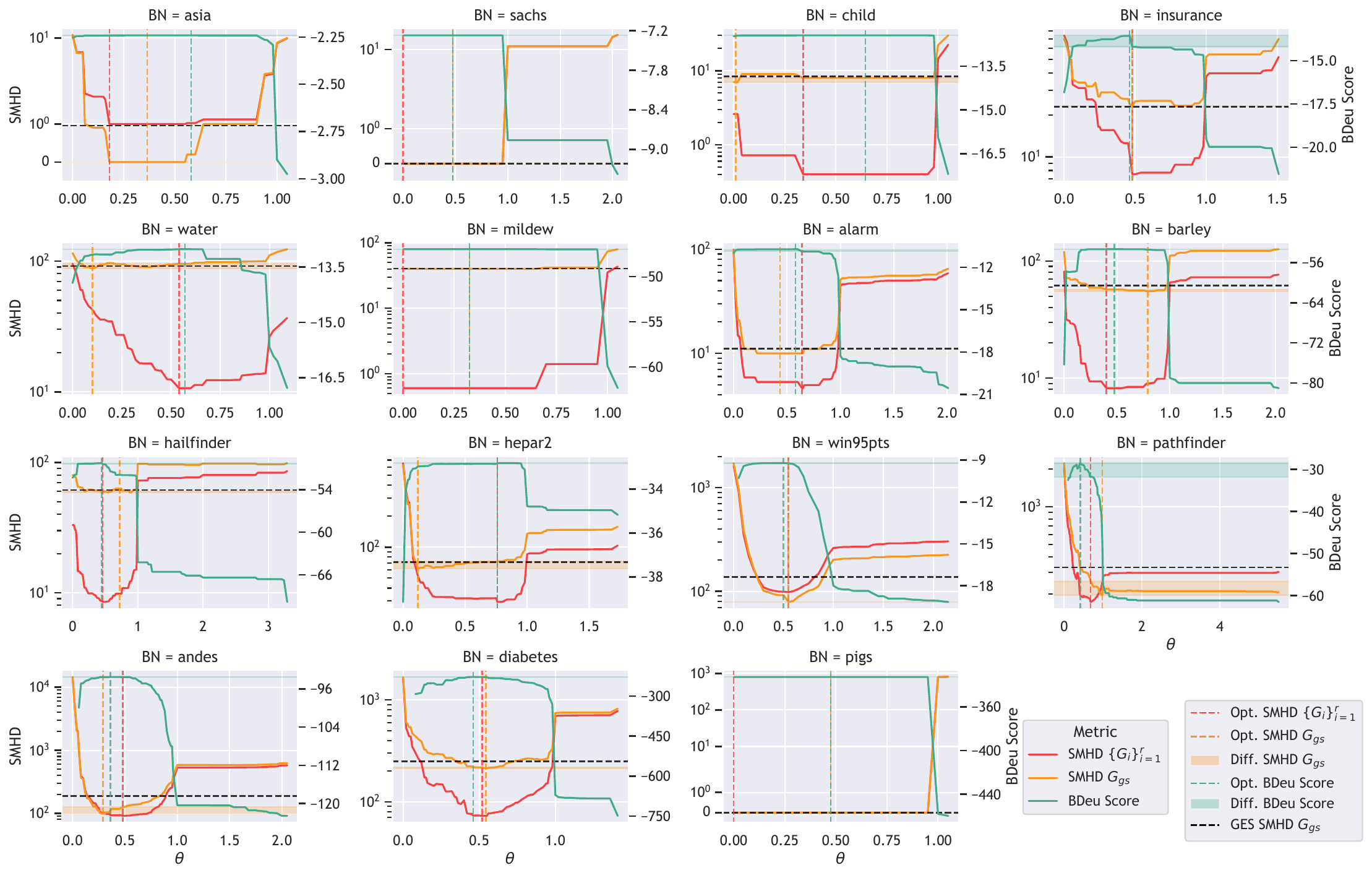}
    \caption{SMHD relative to GES-generated and gold standard BNs (left scale) and normalized BDeu score (right scale), using 50 DAGs.}
    \label{fig:exp3-SMHD-BDeu-50DAGs}
\end{figure*}

\begin{figure*}[htb]
    \centering
    \includegraphics[width=1\linewidth, trim=0 0 0 0]{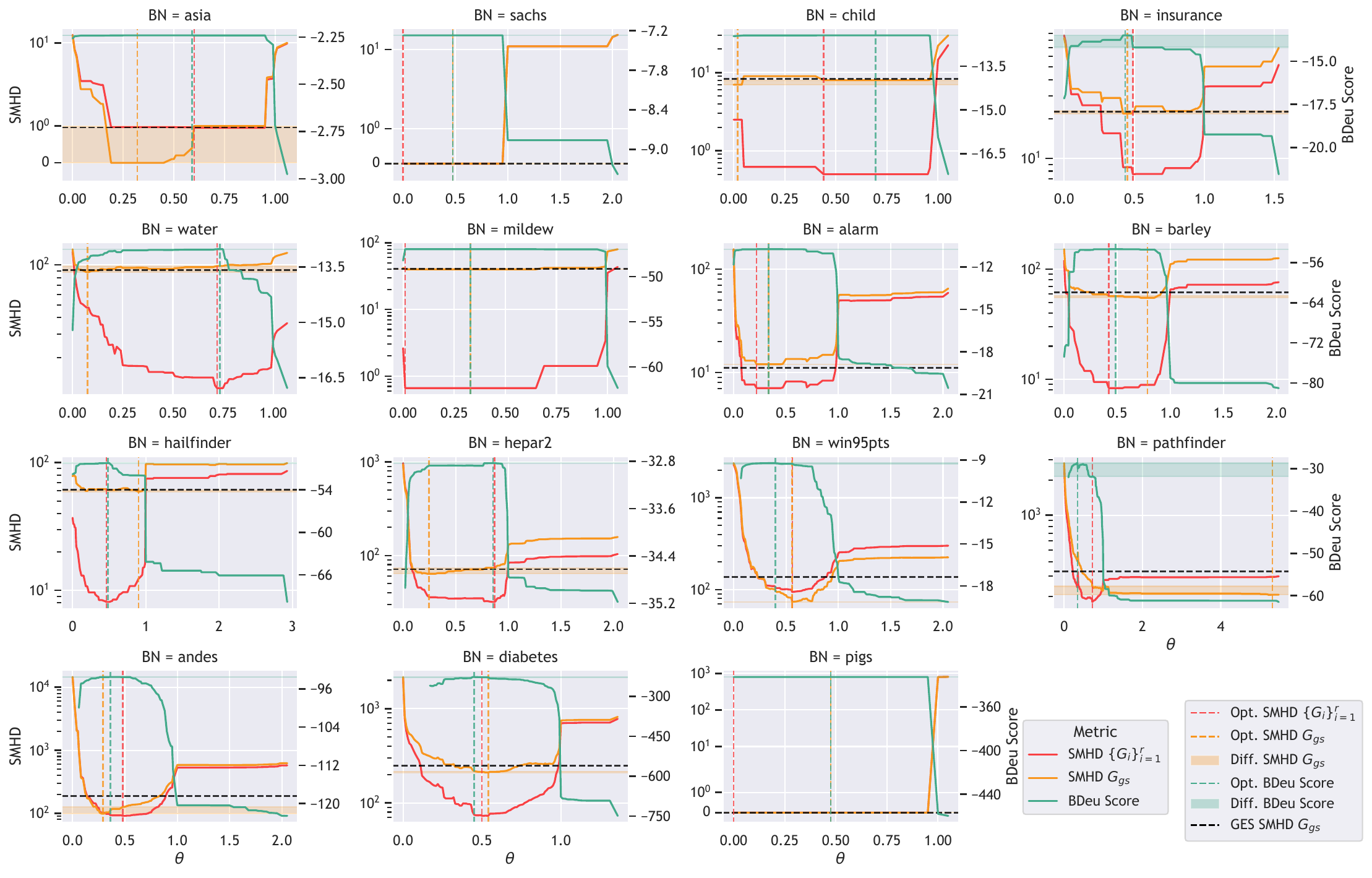}
    \caption{SMHD relative to GES-generated and gold standard BNs (left scale) and normalized BDeu score (right scale), using 100 DAGs.}
    \label{fig:exp3-SMHD-BDeu-100DAGs}
\end{figure*}

\clearpage
\FloatBarrier
\subsection{Results Including \textsc{Sachs} and \textsc{Pigs} BNs}\label{subsec:appx-sachs}
The main paper explains that the \textsc{Sachs} and \textsc{Pigs} networks were excluded from the core analysis because GES consistently reconstructs their gold-standard DAGs. Consequently, the initial fusion $G^+$ already matches the target structure, and MCBNC performs no pruning until $\theta \geq 1$. Since no improvement is possible, these cases offer little insight into the algorithm’s behaviour when structural disagreement exists.

For completeness, Figures~\ref{fig:exp3-SMHD-gold-standard-appx} through~\ref{fig:exp3-TW-appx} reproduce the evaluation curves with these two networks included. As expected, all metrics (SMHD, BDeu, and treewidth) remain flat throughout the entire trajectory, until unnecessary pruning begins at $\theta = 1$. This confirms that MCBNC preserves an optimal consensus when the inputs already match the gold standard.

\begin{figure*}[htb]
    \centering
    \includegraphics[width=1\linewidth, trim=0 0 0 0]{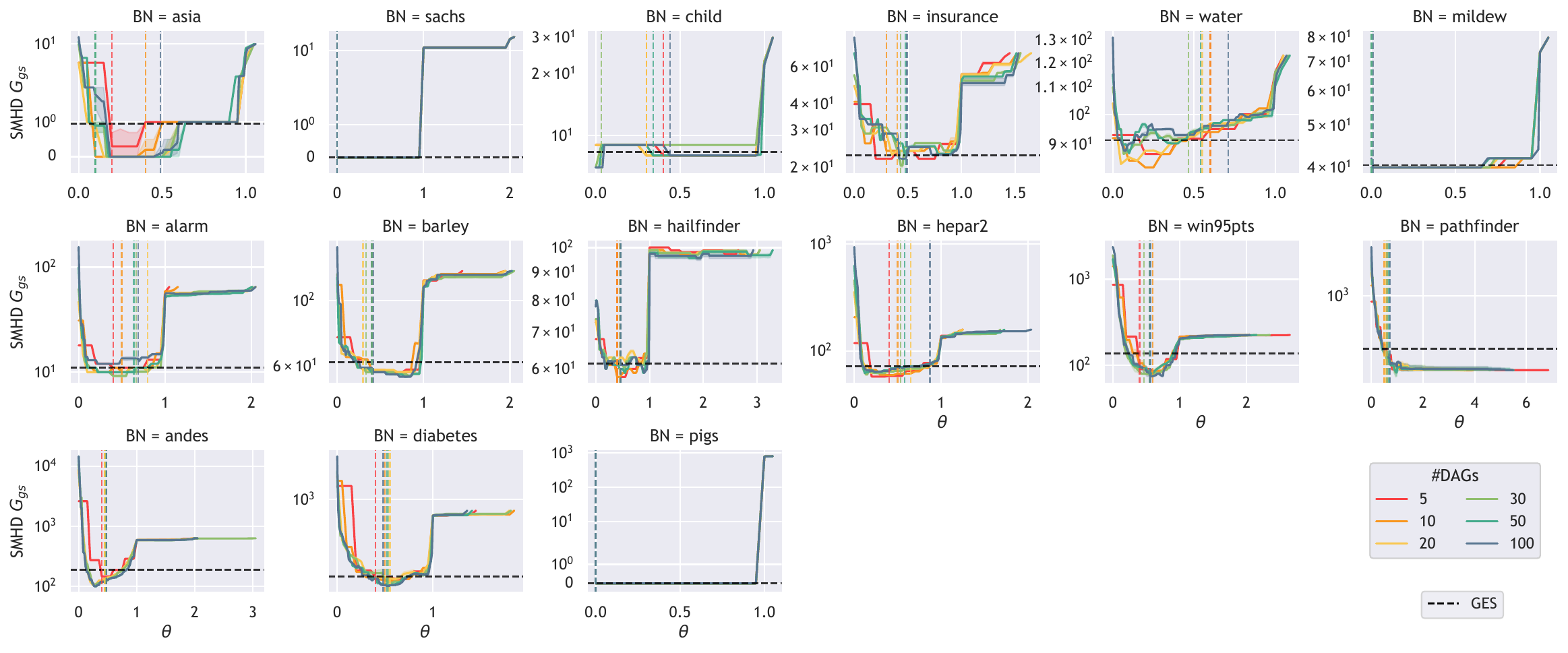}
    \caption{Mean SMHD to the gold-standard DAG across thresholds $\theta$ for each BN. Leftmost point: full fusion $G^+$. Rightmost: empty DAG $\emptyset$. Horizontal line: average SMHD of GES-generated input DAGs. Lower is better.}
    \label{fig:exp3-SMHD-gold-standard-appx}
\end{figure*}

\begin{figure*}[htb]
    \centering
    \includegraphics[width=1\linewidth, trim=0 0 0 0]{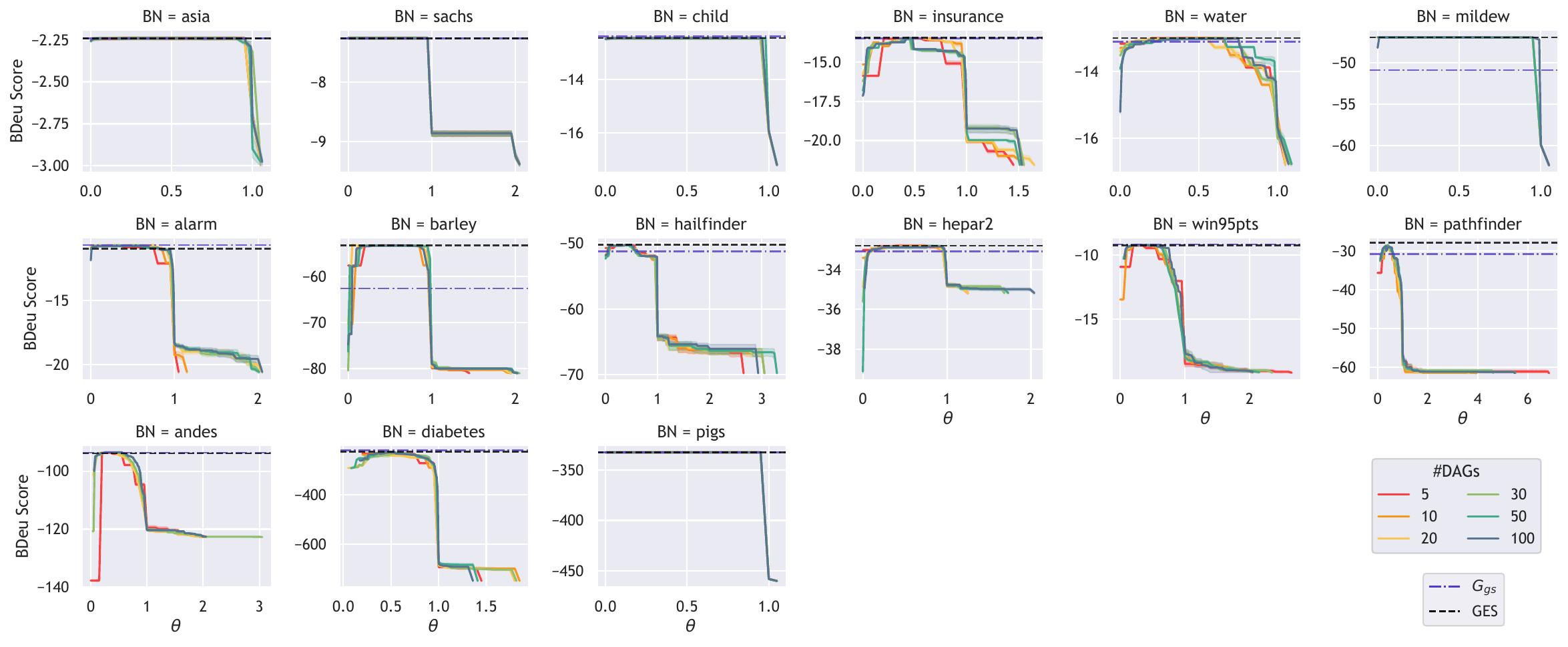}
    \caption{Mean BDeu score across thresholds $\theta$ for each BN. Leftmost point: full fusion $G^+$. Rightmost: empty DAG $\emptyset$. Horizontal lines: average BDeu of GES input DAGs (black) and gold-standard DAG (purple). Higher is better.}
    \label{fig:exp3-BDeu-appx}
\end{figure*}

\begin{figure*}[htb]
    \centering
    \includegraphics[width=1\linewidth, trim=0 0 0 0]{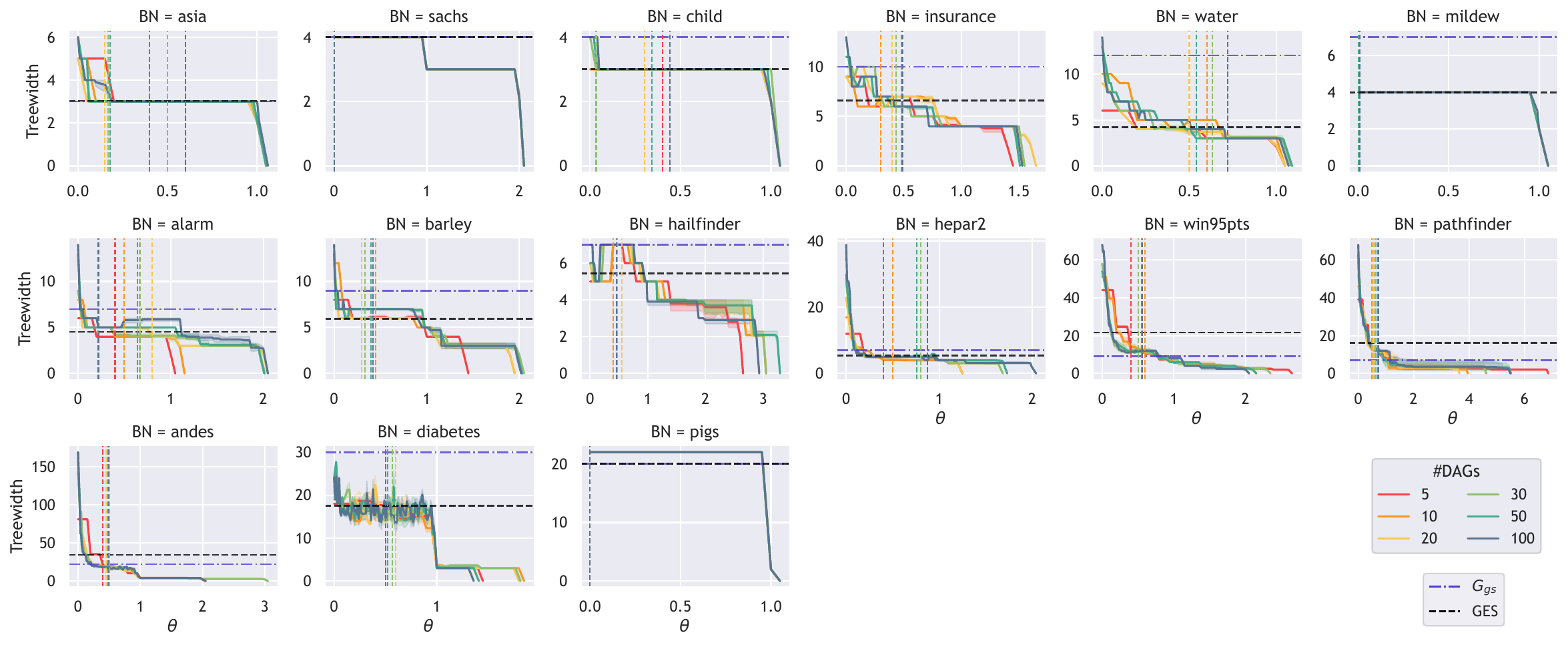}
    \caption{Treewidth of the consensus DAG across thresholds $\theta$ for each BN. Leftmost point: full fusion $G^+$. Rightmost: empty DAG $\emptyset$. Horizontal lines: average treewidth of GES input DAGs (black) and gold-standard DAG (purple). Lower is better.}
    \label{fig:exp3-TW-appx}
\end{figure*}

\FloatBarrier
\subsection{Sensitivity to the Conditioning-Set Size $k_{\max}$}\label{subsec:appx-kmax}
We assess the sensitivity of \textsc{MCBNC} to the conditioning-set cap $k_{\max}$ on the largest tested BN, \textsc{Diabetes} ($n=413$). For each $k_{\max} \in \{0,1,2,3,4,5,10,15,20\}$ and each $r \in \{5,10,20,30,50,100\}$ input DAGs, we ran the full pruning routine and recorded two metrics: (i) structural accuracy to the gold-standard DAG $G_{\text{gs}}$, and (ii) total wall-clock time.

Figure~\ref{fig:kmax-smhd-gs} shows the SMHD compared to the gold standard for all values of $k_{\max}$ and $r$. The curves are visually very similar, indicating that pruning quality is largely unaffected by the cap. The most visible differences occur at $r=30$, where specially $k_{\max}=0$ and $k_{\max}=20$ deviate slightly. However, these variations are not systematic and likely stem from randomness and the effect of a greedy search rather than from $k_{\max}$ itself, particularly for large values, which only expand the search space.

\begin{figure}[htb]
  \centering
  \includegraphics[width=\linewidth]{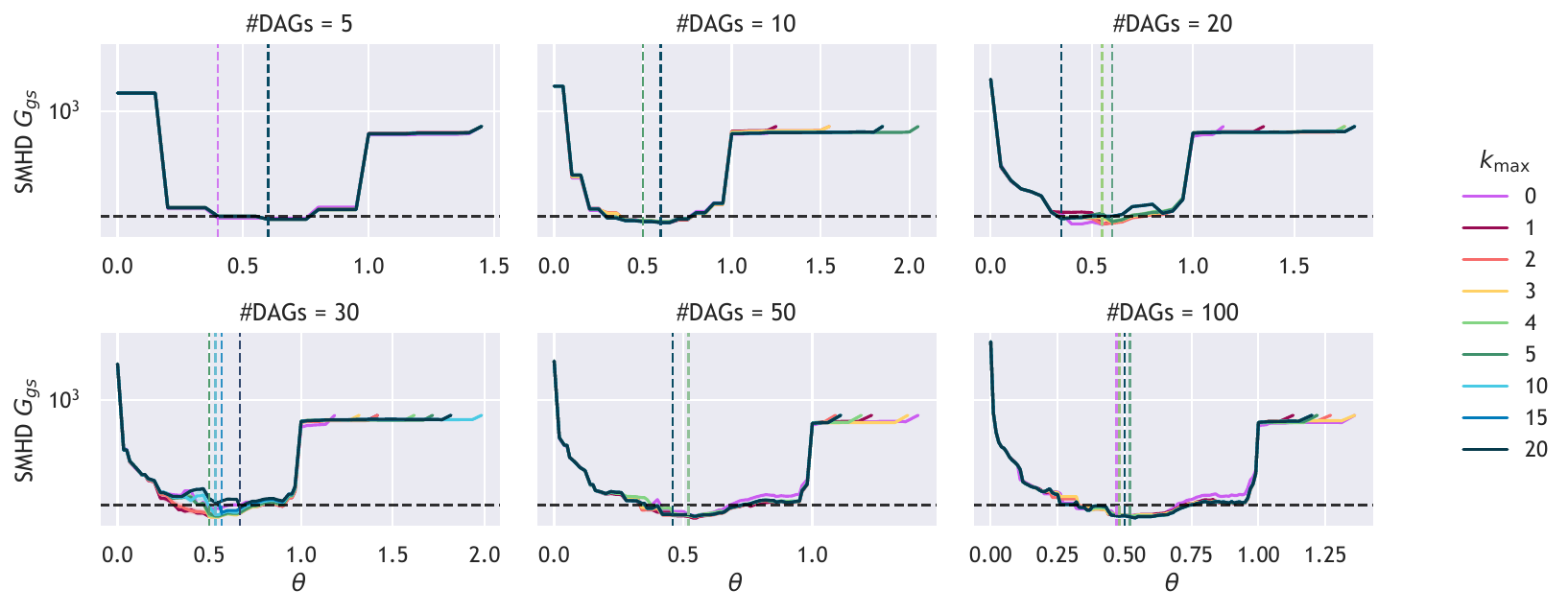}
  \caption{SMHD to the gold-standard DAG $G_{\text{gs}}$ for varying $k_{\max}$, pruning threshold $\theta$, and number of input DAGs $r$ on the \textsc{Diabetes} BN. The horizontal dotted line marks the average SMHD to $G_{\text{gs}}$ of the input DAGs.}

  \label{fig:kmax-smhd-gs}
\end{figure}

Figure~\ref{fig:kmax-bars} provides a complementary summary: it shows the final SMHD to the gold standard at the optimal pruning threshold $\theta$ (corresponding to the vertical lines in Figure~\ref{fig:kmax-smhd-gs}) for each combination of $k_{\max}$ and $r$. All configurations with $k_{\max} \in \{2,3,4,5,10,15,20\}$ achieve nearly identical accuracy. The only pronounced deviations occur at $r = 30$ for $k_{\max} = 0$ and $k_{\max} = 20$, consistent with the earlier curves. Nevertheless, all results lie below the average SMHD of the individual GES input networks (represented by the horizontal dotted line, note the axis zoom). These findings confirm that the performance of \textsc{MCBNC} is not sensitive to the conditioning-set cap. Greedy tie-breaking introduces more structural variation than $k_{\max}$ itself. While increasing $k_{\max}$ enlarges the space of testable independencies, it does not lead to systematically better pruning, and mostly alters the deletion order among weakly supported edges.

\begin{figure}[htb]
  \centering
  \includegraphics[width=\linewidth]{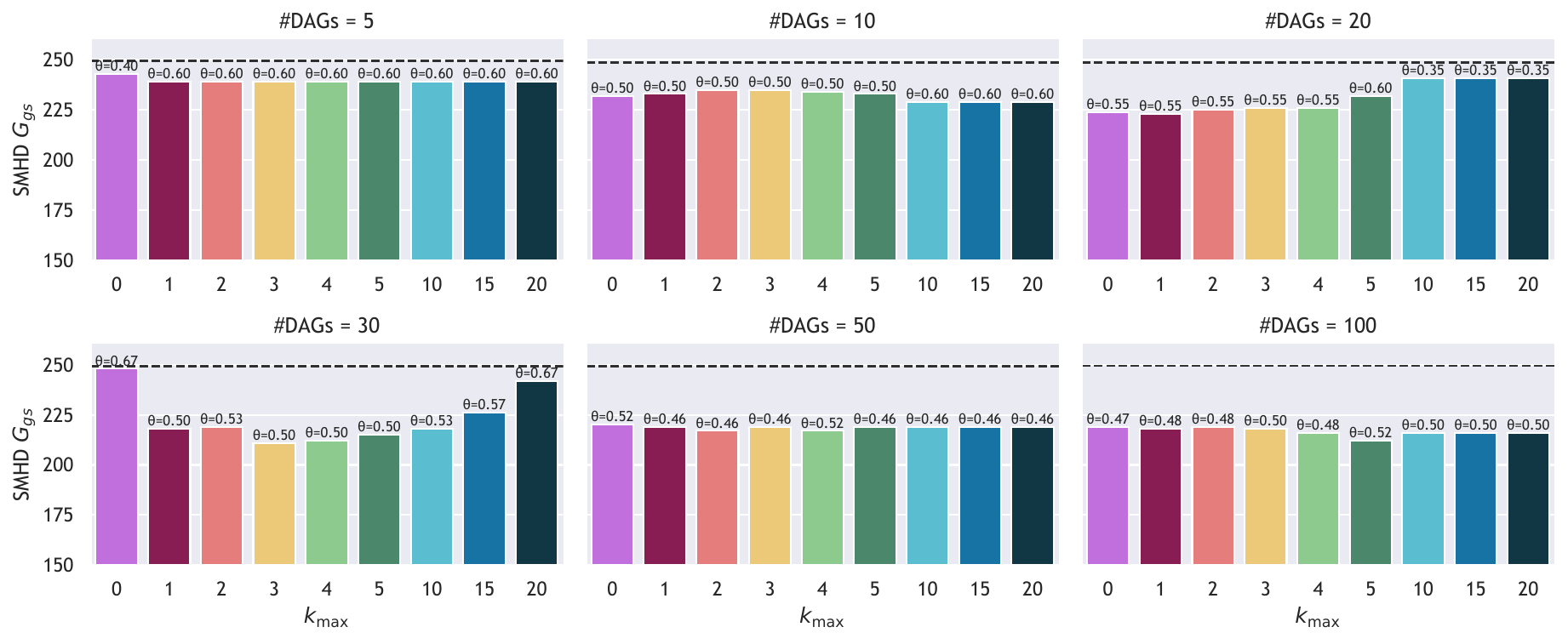}
  \caption{SMHD to the gold standard $G_{\text{gs}}$ at selected $\theta$ for each $k_{\max}$ for \textsc{Diabetes} BN. The horizontal dotted line marks the average SMHD to $G_{\text{gs}}$ of the input DAGs.}
  \label{fig:kmax-bars}
\end{figure}

Figure~\ref{fig:kmax-time} reports the cumulative runtime. As expected, execution time generally increases with $k_{\max}$ due to the exponential number of conditioning subsets. However, some deviations occur: for instance, at $r = 5$, the highest runtime corresponds to $k_{\max} = 0$. This value is an atypical setting in a BES-style search, which may introduce instability and redundant operations. Runtime also does not grow monotonically with $r$. For example, pruning takes longer at $r = 30$ than at $r = 50$ or even $r = 100$ for $k_{\max} = 15$ and $k_{\max} = 20$. This reflects the fact that pruning complexity depends not only on input size but also on the specific substructures generated during the fusion process. In particular, denser or more entangled intermediate CPDAGs can increase the cost of min-cut evaluations. 

These results confirm that, in practice, the exponential term $2^{k_{\max}}$ in the theoretical runtime bound $O(r,m^3,2^{k_{\max}})$ (see Lemma~3) has limited impact. Large conditioning sets are rarely generated, so the worst-case complexity is seldom reached. Nonetheless, when the number of input DAGs is high and the fused structure becomes densely connected, complex subgraphs can emerge, triggering expensive evaluations. This explains why pruning is sometimes slower for $r = 30$ than for $r = 50$ or $r = 100$, depending on the particular connectivity patterns formed during fusion.

\begin{figure}[htb]
  \centering
  \includegraphics[width=\linewidth]{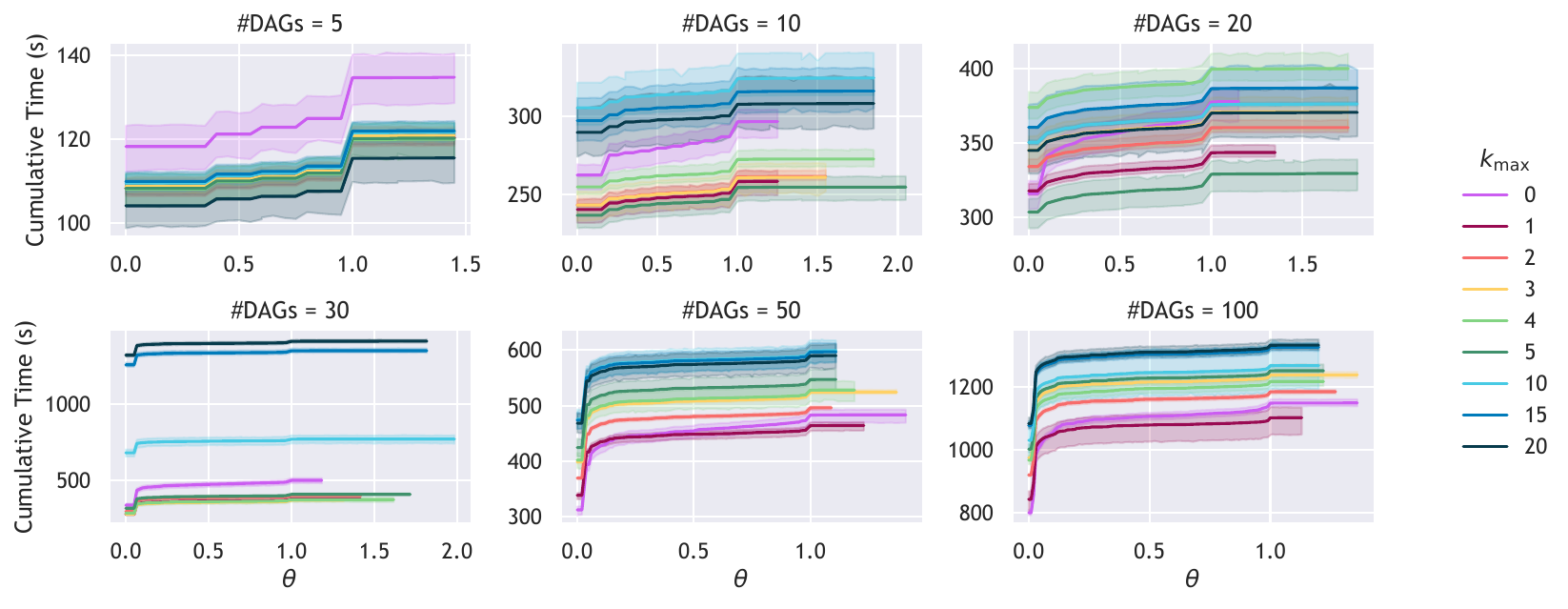}
  \caption{Cumulative runtime as a function of $k_{\max}$ and $\theta$ for \textsc{Diabetes} BN.}
  \label{fig:kmax-time}
\end{figure}

\clearpage
%
%
\section{Ford-Fulkerson Algorithm} \label{sec:ford-fulkerson-appx}
The Ford-Fulkerson algorithm \cite{Ford1956} computes the maximum flow $f^*$ in a network $D=(V,E)$ with capacity function $c: E \to \mathbb{R}^+$ by iteratively augmenting the flow along paths from the source $s$ to the sink $t$. Initially, the flow on every edge is set to zero, i.e., $f(e)=0$ for all $e\in E$. The residual graph $D_f=(V,E_f)$ is constructed as follows: for each edge $e=(u \to v) \in E$, include the forward edge $e$ in $E_f$ with residual capacity $r(u \to v)=c(u \to v)-f(u \to v)$, and also include the reverse edge $e' = (v \to u)$ with residual capacity $r(v \to u)=f(u \to v)$.

At each iteration, an augmenting path $p$ from $s$ to $t$ is identified in $D_f$ (commonly via a breadth-first search), and its bottleneck capacity is computed as $f_p=\min_{e\in p} r(e)$. Then, for every edge $e\in p$ with corresponding reverse edge $e'$, update the flow and residual capacities as follows: 
\begin{equation*}
    f(e) = f(e)+f_p, \quad r(e) = r(e)-f_p, \quad r(e') = r(e')+f_p.
\end{equation*}
This process repeats until no augmenting paths from $s$ to $t$ exist in $D_f$. At termination, the maximum flow is given by 
\begin{equation*}
f^* = \text{val}(f)=\sum_{e\in\delta^+(s)} f(e).
\end{equation*}
The final residual graph defines the minimum cut by partitioning $V$ into two disjoint sets: $S^*$, the set of vertices reachable from $s$ in $D_f$, and $T^* = V \setminus S^*$. The set of cut edges is 
\begin{equation*}
\{ (u \to v) \in E \mid u \in S^*,\, v \in T^*,\, r(u \to v)=0 \}.
\end{equation*}
By the Max-Flow Min-Cut Theorem \cite{Ahuja1993,Ford1956}, the total capacity of this cut equals $f^*$.

%
%
\section{Illustrative Example of MCBNC Algorithm} \label{sec:example}
To better illustrate the mechanics of MCBNC, this section walks through a complete worked example using three small DAGs defined over a shared set of variables. We demonstrate how the initial fusion is constructed, how edge criticality is computed via the min-cut algorithm, and how pruning decisions are made across iterations. The example illustrates the effect of the pruning threshold $\theta$ and demonstrates how consensus is progressively achieved.

\subsection{Initialization}

Consider three directed acyclic graphs (DAGs) $\{G_i\}_{i=1}^{3}$ defined over the variable set $V = \{w, x, y, z\}$, with corresponding edge sets:
\begin{align*}
    E_1 &= \{w \to x,\ x \to y,\ y \to z\}, \\
    E_2 &= \{w \to x,\ w \to y,\ x \to z\}, \\
    E_3 &= \{w \to x,\ y \to x,\ x \to z\}.
\end{align*}
A heuristic ordering $\sigma = (w, y, x, z)$ is obtained using the method proposed in \cite{Puerta2021Fusion}. The transformed DAGs\footnote{Note that $G_2$ and $G_3$ already comply with $\sigma$, i.e., $G_2 = G_2^{\sigma}$ and $G_3 = G_3^{\sigma}$, while $G_1 \neq G_1^{\sigma}$.} $\{G_i^{\sigma}\}_{i=1}^{3}$, obtained by aligning the edges to respect $\sigma$, have edge sets:
\begin{align*}
    E_1^{\sigma} &= \{w \to x,\ w \to y,\ y \to x,\ y \to z\}, \\
    E_2^{\sigma} &= \{w \to x,\ w \to y,\ x \to z\}, \\
    E_3^{\sigma} &= \{w \to x,\ y \to x,\ x \to z\}.
\end{align*}
The initial fused graph is obtained by taking the union of the transformed edge sets:
\begin{equation*}
G^+ = (V, E^+), \quad \text{where} \quad E^+ = E_1^{\sigma} \cup E_2^{\sigma} \cup E_3^{\sigma}.
\end{equation*}
Expanding $E^+$ explicitly,
\begin{equation*}
    E^+ = \{w \to x, w \to y, x \to z, y \to x, y \to z\}.
\end{equation*}
Before performing any min-cut analysis between two nodes, we extract the ancestral subgraph of the node pair and the conditioning set, and moralize only that subgraph. For this example, we set the threshold $\theta=0.5$, meaning that any edge with a criticality score $\Psi_e$ below this value will be pruned.

\subsection{First Iteration}

The algorithm iteratively evaluates each edge $e \in E^+$ by analyzing all possible conditioning sets $H \subseteq \mathcal{P}_e$ in the actual iteration. For each $H$, we first extract the ancestral subgraph of the nodes $\{u, v\} \cup H$ from each input DAG $\{G_i\}_{i=1}^{3}$, then moralize this subgraph to produce $\{\widetilde{G}_i\}_{i=1}^{3}$, and finally remove the conditioning set $H$ to construct the conditioned graphs $\{\widetilde{G}_i^{H}\}_{i=1}^{3}$. The size of these conditioning sets is limited by a parameter $k_{\max}$ to ensure computational tractability. In this example, all arcs are directed during the first iteration, and $H = \emptyset$ for every edge, as no valid conditioning sets exist yet. Subsequent iterations may consider non-empty conditioning sets as the network structure evolves.

For each edge $e = (u \to v) \in E^+$, the criticality score is computed as:
\begin{equation*}
    \Psi^H_{(u\to v)} = \frac{1}{3} \sum_{i=1}^{3} \left| S_i^H \right|,
\end{equation*}
where $S_i^H$ is the min-cut set in $\widetilde{G}_i^H$. Evaluating $\Psi_e$ for each edge:

\[
\begin{array}{lllll}
    \Psi^{\{\}}_{(w\to x)} = 1.0, & \Psi^{\{\}}_{(y\to z)} = 0.\wideparen{3}, & \Psi^{\{\}}_{(w\to y)} = 0.\wideparen{6}, & \Psi^{\{\}}_{(x\to z)} = 0.\wideparen{6}, & \Psi^{\{\}}_{(y\to x)} = 0.\wideparen{6}.
\end{array}
\]

Since the minimal score $\Psi_{(y\to z)}^{\{\}} = 0.\wideparen{3} < \theta = 0.5$, the edge $(y\to z)$ is removed from $E^+$ with empty conditioning set ($\{\}$) using Chickering's operator \cite{chickering_optimal_2002}, yielding:
\begin{equation*}
    G^+ = \left(V, E^+\right), \quad E^+ = \left\{w \to x, w \to y, x \to z, y \to x \right\}.
\end{equation*}

Additionally, $(y \to z)$ is removed from the original DAGs, updating $G_1$ to
\begin{equation*}
    G_1 = (V, E_1 = \{w \to x, x \to y\}).
\end{equation*}

The fused DAG $G^+$ is then converted into a CPDAG, yielding the result of the first iteration:
\begin{equation*}
    G^*_{(1)} = \left(V, E^*_{(1)}\right), \quad E^*_{(1)} = \left\{w - x, w - y, x - z, y - x \right\}.
\end{equation*}

\subsection{Second Iteration}  
In the second iteration, we recompute the min-cut values for the fused edges obtained in the previous iteration $G^*_{(1)}$. For undirected edges, both orientations are evaluated separately. For instance, the edge $e = (w - x)$ yields the arcs 
\begin{equation*}
    e^{\rightarrow} = (w \rightarrow x) \quad \text{and} \quad e^{\leftarrow} = (w \leftarrow x).
\end{equation*}
Following the same procedure as in the first iteration, we compute the criticality score $\Psi^{H}_{(u \to v)}$ for each arc $e = (u \to v) \in E^+$ and each of its conditioning sets $H \subseteq \mathcal{P}_e$. Again, before each criticality computation, the ancestral subgraph of the involved nodes and conditioning set is extracted and moralized. The computed scores are:

\begin{equation*}
    \begin{array}{lllll}
    \Psi^{\{\}}_{(w \to x)} = 1, & \Psi^{\{y\}}_{(w \to x)} = 1.\wideparen{3}, & \Psi^{\{\}}_{(w \leftarrow x)} = 1, & \Psi^{\{y\}}_{(w \leftarrow x)} = 1.\wideparen{3}, & \Psi^{\{\}}_{(w \to y)} = 0.\wideparen{6}, \\[5pt]
    \Psi^{\{x\}}_{(w \to y)} = 0.\wideparen{6}, & \Psi^{\{\}}_{(w \leftarrow y)} = 0.\wideparen{6}, & \Psi^{\{x\}}_{(w \leftarrow y)} = 0.\wideparen{6}, & \Psi^{\{\}}_{(x \to z)} = 0.\wideparen{6}, & \Psi^{\{\}}_{(x \leftarrow z)} = 0.\wideparen{6}, \\[5pt]
    \Psi^{\{\}}_{(y \to x)} = 0.\wideparen{6}, & \Psi^{\{w\}}_{(y \to x)} = 1.\wideparen{3}, & \Psi^{\{\}}_{(y \leftarrow x)} = 0.\wideparen{6}, & \Psi^{\{w\}}_{(x \to y)} = 1.\wideparen{3}.
    \end{array}
\end{equation*}

Since all values remain above the threshold $\theta = 0.5$, no additional edges are removed; the structure from $G^*_{(1)}$ is retained so $G^*_{(2)} = G^*_{(1)}$. The final DAG is obtained by converting the CPDAG $G^*_{(2)}$ back into a DAG, yielding
\begin{equation*}
    G^* = (V, E^*), \quad \text{with} \quad E^* = \{w \to x,\; w \to y,\; x \to z,\; y \to x\}.
\end{equation*}
This final structure represents a consensus BN that preserves essential dependencies while removing unnecessary complexity.\footnote{Since multiple DAGs can belong to the same equivalence class, this result is not unique. For instance, the alternative DAG $G^{*'} = (V, E^{*'})$ with edges $E^{*'} = \{x \to w, w \to y, z \to x, x \to y\}$ encodes the same conditional independencies and thus belongs to the same equivalence class as $G^*$.}

\subsection{Equivalence Class Analysis}  
We now analyse the equivalence classes of the input and fused DAGs by comparing the conditional independence (CI) relations each graph encodes. A DAG's equivalence class is determined by its skeleton (the underlying undirected graph) and v-structures (colliders)\footnote{Formally, the skeleton is the undirected graph $\widetilde{G} = (V,\widetilde{E})$ where $\widetilde{E} = \{(u \text{---} v) : (u \to v) \in E \lor (v \to u) \in E\}$, and a v-structure is any triple $(x,z,y)$ where $E$ contains $x \to z \leftarrow y$ with no edge between $x$ and $y$. The union of these features forms a \textit{pattern} that uniquely identifies the Markov equivalence class \cite{Koller_Friedman}.}, which defines its CI relations. We can assess whether the consensus graph retains meaningful dependencies while eliminating spurious ones by studying how these relationships evolve throughout the fusion process.

The input DAGs encode the following conditional independences:
\begin{align*}
    \text{CI}(E_1) &= \{w \perp z \mid x,\ w \perp z \mid y,\ x \perp z \mid y,\ w \perp y \mid x \}, \\
    \text{CI}(E_2) &= \{w \perp z \mid x,\ y \perp z \mid x,\ y \perp z \mid w,\ x \perp y \mid w \}, \\
    \text{CI}(E_3) &= \{w \perp z \mid x,\ y \perp z \mid x,\ w \perp y \}.
\end{align*}

During the intermediate transformations, structural modifications alter these relationships. The first step, aligning $E_1$ to the heuristic ordering $\sigma$, results in a loss of two conditional independencies, leaving
\begin{align*}
    \text{CI}(E_1^{\sigma}) &= \{w \perp z \mid x,\ w \perp y \mid x \}.
\end{align*}
The initial fused DAG $E^+$ introduces a stricter dependency structure, collapsing the previous independencies into a single constraint: 
\begin{align*}
    \text{CI}(E^+) &= \{w \perp z \mid \{x,y\} \}.
\end{align*}
Only $w$ and $z$ remain independent when both $x$ and $y$ are conditioned upon, with almost all conditional independences removed.

Refining the initial fusion with the MCBNC algorithm helps recover key relationships that better represent the input networks. After the first and second iterations, structures $G^*_{(1)}$ and $G^*_{(2)}$, as well as the final DAG $G^*$ have 
\begin{align*}
    \text{CI}(G^*_{(1)}) =  \text{CI}(G^*) &= \{w \perp z \mid x,\ y \perp z \mid x \},
\end{align*}
restoring the only two conditional independencies that are repeated among the input DAGs, appearing $w \perp z \mid x$ on $E_1, E_2$ and $E_3$; and $y \perp z \mid x$ on $E_2$ and $E_3$. These represent the most stable shared constraints across the input networks, reinforcing that the consensus graph should preserve only widely supported (in)dependencies. This leads to a final consensus DAG that is both compact and representative, avoiding overfitting to any single input network while maintaining interpretability and usability in real-world cases.

\end{document}